\documentclass[oneside,11pt]{article} 
\hoffset=0in \voffset=0in \evensidemargin=0in \oddsidemargin=0in
\textwidth=6.5in \topmargin=0in \headheight=0.0in \headsep=0.0in
\textheight=9in

\synctex=1

\usepackage{graphicx} 
\usepackage[utf8]{inputenc} 
\usepackage[T1]{fontenc}    
\usepackage{microtype}   

\usepackage{xcolor}
\usepackage{amsmath}
\usepackage{amssymb}
\usepackage{amsthm}
\usepackage{mathtools}
\usepackage{commath}
\usepackage{dsfont}

\theoremstyle{definition}
\newtheorem{theorem}{Theorem}[section]
\newtheorem{lemma}[theorem]{Lemma}
\newtheorem{definition}[theorem]{Definition}
\newtheorem{claim}[theorem]{Claim}
\newtheorem{corollary}[theorem]{Corollary}

\title{Contrastive Learning with Nasty Noise}
\author{Ziruo Zhao \\
zzhao83@stevens.edu}
\date{\today}

\begin{document}

\maketitle

\begin{abstract}
Contrastive learning has emerged as a powerful paradigm for self-supervised representation learning. This work analyzes the theoretical limits of contrastive learning under nasty noise, where an adversary modifies or replaces training samples. Using PAC learning and VC-dimension analysis, lower and upper bounds on sample complexity in adversarial settings are established. Additionally, data-dependent sample complexity bounds based on the $\ell_2$-distance function are derived.
\end{abstract}

\section{Preliminaries}

In contrastive learning, for samples from a domain $\mathrm{V}$, the learned representation is a distance $\rho :\mathrm{V}\times \mathrm{V} \rightarrow \mathbb{R}$. A popular distance function is ${\ell}_p$-norm ${\rho}_p(x,y)=\norm{f(x)-f(y)}_p$ for some representation function $f:\mathrm{V} \rightarrow \mathbb{R}^d$ in dimension $d$. 
Let  $\mathcal{H}$ be a hypothesis class and $h_{\rho} \in \mathcal{H}$ be an hypothesis with respect to an unknown distance function $\rho$. A learning task is specified using a hypothesis class of boolean classifiers defined over an instance space, denoted $\mathrm{V}$. A boolean classifier is a function $h: \mathrm{V^3} \rightarrow \{0,1\}$.


\subsection{The classical PAC model}
\paragraph{} In PAC model, the learning algorithm has access to labeled examples of the form $(x,y^+,z^-)$ from a distribution $\mathcal{D}$. The examples are labeled by the target classifier $h^*$ in $\mathcal{H}$. The goal of contrastive learning is to create a classifier from $\mathcal{H}$ which accurately labels subsequent unlabeled inputs. For a distance function $\rho$, the hypothesis with respect to this distance function is $h_{\rho}(x,y,z)=\mathbf{sign}(\rho(x,y)-\rho(x,z))$. When $h_{\rho}(x,y,z)=-1$, the example will be labeled as $(x,y^+,z^-)$, meaning $\rho (x,y) < \rho (x,z)$; otherwise $h_{\rho}(x,y,z)=1$ then label $(x,y^-,z^+)$. 




\begin{definition}[Contrastive learning, classical PAC case]
    A hypothesis class $\mathcal{H}$ is PAC learnable if there exist a learning algorithm that, for any $h^*\in \mathcal{H}$, any input parameters $0< \epsilon <1/2$ and $0< \delta <1$ and any distribution $\mathcal{D}$, when given access to samples in the form $(x,y^+,z^-)$, the minimum number of samples required is denoted as $n(\epsilon,\delta)$, with probability at least $1-\delta$ outputs a function $h_{\rho} \in \mathcal{H}$ to achieve error rate $\epsilon$, that is
\begin{equation*}
    Pr_{(x,y,z)\sim \mathcal{D}}[h_{\rho}(x,y,z)\neq h^*(x,y,z)] <\epsilon
\end{equation*}
\end{definition}


\subsection{PAC learning with nasty noise}
\paragraph{} As in PAC model, a distribution $\mathcal{D}$ and a target classifier is given. For PAC model in the presence of nasty noise, the adversary draws a sample set $S'$ of size $n$ from the distribution $\mathcal{D}$. Having full knowledge of the learning algorithm, the target classifier $h^*$, the distribution $\mathcal{D}$, and the sample drawn, the adversary chooses $m$ examples from the sample set. The $m$ examples chosen are modified or replaced by the adversary, introducing noisy views or noisy labels. In contrastive learning, noisy views compel the representations of different views to align with each other, even if there is no useful information for learning distinguishable features. This misalignment ultimately causes the algorithm to learn inappropriate noisy features.

The $n-m$ examples not chosen by the adversary remain unchanged and are labeled by their correct labels according to $h^*$. The modified sample of n points, denoted S, is then given to the learning algorithm. The number of examples $m$ that the adversary may modify should be distributed according to the binomial distribution $\mathbf{Bin}(n,\eta)$, where $\eta$ is the rate of nasty noise.

\begin{definition}[Contrastive learning, nasty noise case]
    Let $\mathcal{H}$ be a hypothesis class, and $h^*$ be a target classifier from $\mathcal{H}$. A nasty adversary takes as input a sample size $n$ requested by the learning algorithm, draws $n$ samples from a distribution $\mathcal{D}$ and labels samples according to $h^*$. Denote the rate of nasty noise as $\eta$. The adversary may replace $m$ examples from the sample set, where $m\sim \mathbf{Bin}(n,\eta)$, and returns the modified sample set $S$ to the learning algorithm. With accuracy parameter $0<\epsilon <1$ and confidence parameter $0<\delta <1$, We say that an algorithm $\mathcal{A}$ PAC learns the class $\mathcal{H}$ with nasty sample noise of rate $\eta \geq 0$ if it outputs a function $h_{\rho} \in \mathcal{H}$ to achieve error rate $\epsilon$, with probability at least $1-\delta$.
\end{definition}

\subsection{VC theory basics}
\begin{definition}[Shattering]
    Let $\mathcal{X}$ be an instance space. We say that a finite set $S\subset \mathcal{X}$ is shattered by a hypothesis class $\mathcal{H}$ if, for each of the $2^{|S|}$ possible labeling of the points in $S$, there exists some function in $\mathcal{H}$ consistent with that labeling.
\end{definition}

\begin{definition}[VC-dimension]
    The VC-dimension of a hypothesis class $\mathcal{H}$, denoted $VCdim(\mathcal{H})$, is the maximal size $d$ of a set $S\subset \mathcal{X}$ that can be shattered by $\mathcal{H}$. If $\mathcal{H}$ can shatter sets of any integer $d$, we say $\mathcal{H}$ has infinite VC-dimension that $VCdim(\mathcal{H})=\infty$.
\end{definition}

\begin{lemma}
    For any two classes $\mathcal{H}$ and $\mathcal{F}$ over $\mathcal{X}$, 
\begin{enumerate}
    \item The class of negations $\{h|\mathcal{X}\backslash h\in \mathcal{H}\}$ has the same VC-dimension as the class $\mathcal{H}$
    \item The class of unions $\{h\cup f|h\in \mathcal{H},f\in \mathcal{F}\}$ has the VC-dimension at most $VCdim(\mathcal{H})+VCdim(\mathcal{F})+1$.
    \item The class of intersections $\{h\cap f|h\in \mathcal{H},f\in \mathcal{F}\}$ has the VC-dimension at most $VCdim(\mathcal{H})+VCdim(\mathcal{F})+1$.
\end{enumerate}
\end{lemma}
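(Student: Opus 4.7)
The three parts are natural to tackle in sequence, since part~3 reduces via De~Morgan to parts~1 and~2.

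For part~1, I would observe that shattering is symmetric under complementation. Fix a finite $S\subset \calX$. For each subset $T\subseteq S$ there is some $h\in\calH$ with $h\cap S = T$ if and only if the complement $\calX\setminus h$ satisfies $(\calX\setminus h)\cap S = S\setminus T$. Hence $\calH$ realizes all $2^{|S|}$ labelings of $S$ if and only if its negation class does, so $S$ is shattered by one exactly when it is shattered by the other. This gives equality of VC-dimensions.

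For part~2, I would use the Sauer--Shelah lemma together with a counting argument. Write $d_1=VCdim(\calH)$ and $d_2=VCdim(\calF)$, and suppose $S$ of size $n$ is shattered by $\{h\cup f:h\in\calH,f\in\calF\}$. Every $T\subseteq S$ must admit a witness pair $(h_T,f_T)$ with $T=(h_T\cap S)\cup(f_T\cap S)$, so the assignment $T\mapsto(h_T\cap S,\,f_T\cap S)$ is an injection from $2^S$ into $\Pi_\calH(S)\times\Pi_\calF(S)$. Sauer--Shelah bounds each trace set, giving
\[
2^n \;\le\; |\Pi_\calH(S)|\cdot|\Pi_\calF(S)| \;\le\; \binom{n}{\le d_1}\binom{n}{\le d_2}.
\]
The remaining task is to show this inequality forces $n\le d_1+d_2+1$, which yields the claimed bound.

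For part~3, I would invoke De~Morgan: $h\cap f = \calX\setminus\bigl((\calX\setminus h)\cup(\calX\setminus f)\bigr)$. Let $\calH'$ and $\calF'$ be the negation classes. By part~1, $VCdim(\calH')=d_1$ and $VCdim(\calF')=d_2$. Applying part~2 to $\calH'$ and $\calF'$ bounds the VC-dimension of $\{h'\cup f'\}$ by $d_1+d_2+1$, and a final application of part~1 transfers that bound to its negation class, which is exactly the class of intersections.

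The hard step is the arithmetic at the end of part~2: converting the binomial inequality $\binom{n}{\le d_1}\binom{n}{\le d_2}\ge 2^n$ into the clean linear bound $n\le d_1+d_2+1$. Naive expansion of Sauer--Shelah is too loose in the balanced regime $d_1\approx d_2$, so this will require a more careful combinatorial estimate (for example, a Vandermonde-style identity or an induction on $d_1+d_2$ that tracks both traces jointly rather than bounding them separately). Parts~1 and~3 are essentially formal once part~2 is in hand.
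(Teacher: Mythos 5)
The paper states this lemma as standard background and supplies no proof of its own, so there is nothing to compare your argument against; I evaluate it on its merits. Parts 1 and 3 are fine: shattering is invariant under complementation, and De Morgan reduces intersections to negations and unions exactly as you describe.

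Part 2, however, contains a genuine gap, and it is not merely the ``hard arithmetic'' you flag at the end --- the implication you defer is false. The injection $T \mapsto (h_T\cap S,\, f_T\cap S)$ together with Sauer--Shelah does give $2^n \le \binom{n}{\le d_1}\binom{n}{\le d_2}$, but this inequality does not force $n \le d_1+d_2+1$. Take $d_1=d_2=1$: for $n=4$ the product is $5\cdot 5=25\ge 16=2^4$, and for $n=5$ it is $6\cdot 6=36\ge 32=2^5$, even though $d_1+d_2+1=3$. So no Vandermonde identity or sharper estimate of the binomial sums can close the argument: the product of the two trace counts genuinely exceeds $2^n$ for values of $n$ strictly above $d_1+d_2+1$, and this route can only ever yield a bound of the form $n=O(d_1+d_2)$ with a multiplicative constant (here $n\le 5$ rather than $n\le 3$). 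Obtaining the stated additive bound requires a different mechanism --- typically an induction on $|S|$ using the shifting (down-shift/compression) operation applied to both classes simultaneously, in the style of the Wenocur--Dudley argument. For what it is worth, the weaker $O(d_1+d_2)$ bound that your counting argument does deliver is all that is ever needed downstream in this paper (e.g.\ to control the VC-dimension of the symmetric-difference class $\{h\Delta h^*\}$ in the nasty-noise upper bound), so the gap concerns the lemma as stated rather than its applications.
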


\begin{definition}
    The dual $\mathcal{H}^\bot \subseteq \{0,1\}^\mathcal{H}$ of a class $\mathcal{H} \subseteq \{0,1\}^\mathcal{X}$ is defined to be the set $\{x^\bot |x\in \mathcal{X}$ where $x^\bot$ is defined by $x^\bot (h)=h(x)$ for all $h\in \mathcal{H}$.
\end{definition}

The following claim gives a tight bound on the VC-dimension of the dual class:
\begin{claim}
    For every class $\mathcal{H}$, $VCdim(\mathcal{H}) \geq \lfloor log\ VCdim(\mathcal{H}^\bot)\rfloor$.
\end{claim}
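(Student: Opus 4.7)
The plan is to run the standard Assouad-style argument: start from a set of hypotheses shattered by the dual, use the shattering witnesses to construct primal points, and then verify that those primal points are themselves shattered by $\mathcal{H}$.

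First I would set $d=\mathrm{VCdim}(\mathcal{H}^\bot)$ and $k=\lfloor \log d \rfloor$, so that $2^k\leq d$. By the definition of VC-dimension applied to $\mathcal{H}^\bot$, there exist hypotheses $h_1,\dots,h_d\in\mathcal{H}$ such that the set $\{h_1,\dots,h_d\}$ is shattered by $\mathcal{H}^\bot$. Selecting any $2^k$ of these, I would reindex them as $\{h_\sigma : \sigma\in\{0,1\}^k\}$. The shattering property, unpacked through the definition of the dual, says: for every Boolean assignment $\sigma\mapsto b_\sigma\in\{0,1\}$ on the $2^k$ hypotheses, there is some $x\in\mathcal{X}$ with $h_\sigma(x)=b_\sigma$ for all $\sigma$.

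The key step is the choice of $k$ witness points. For each coordinate $i\in\{1,\dots,k\}$, I consider the specific labeling $b_\sigma^{(i)}:=\sigma_i$ (the $i$-th bit of the index $\sigma$). By the shattering property above, there exists $x_i\in\mathcal{X}$ with $h_\sigma(x_i)=\sigma_i$ for every $\sigma\in\{0,1\}^k$. I would then claim that $S=\{x_1,\dots,x_k\}$ is shattered by $\mathcal{H}$: given any target labeling $(b_1,\dots,b_k)\in\{0,1\}^k$, take $\sigma=(b_1,\dots,b_k)$ and use the hypothesis $h_\sigma$; by construction $h_\sigma(x_i)=\sigma_i=b_i$ for every $i$, so $h_\sigma$ realizes the prescribed labeling. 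This yields $\mathrm{VCdim}(\mathcal{H})\geq k=\lfloor\log\mathrm{VCdim}(\mathcal{H}^\bot)\rfloor$, which is the desired inequality.

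I do not expect a genuine obstacle here; the only thing that demands care is the bookkeeping, namely keeping straight that ``points'' of the dual space are hypotheses in $\mathcal{H}$ while ``hypotheses'' of the dual space are indexed by elements of $\mathcal{X}$. The trick is essentially combinatorial duality: the $2^k$ hypotheses $h_\sigma$ play the role of rows of a $2^k\times k$ binary matrix whose columns are the indicator vectors produced by $x_1,\dots,x_k$, and shattering in the dual guarantees we can prescribe each column arbitrarily, which in particular lets us realize the binary enumeration of $\{0,1\}^k$ as rows, so the $x_i$'s are shattered in the primal. The proof thus reduces to carefully choosing that binary-enumeration pattern and reading off the two shattering properties in opposite directions.
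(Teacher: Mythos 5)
Your argument is correct and is the standard Assouad-style duality proof; the paper states this claim without any proof (it is imported as a known fact), so there is no in-paper argument to compare against, and your write-up would serve as a valid self-contained justification. The only bookkeeping point worth adding is that the witness points $x_1,\dots,x_k$ are automatically distinct (for $i\neq j$ some $\sigma$ has $\sigma_i\neq\sigma_j$, whence $h_\sigma(x_i)=\sigma_i\neq\sigma_j=h_\sigma(x_j)$), so the set $S$ genuinely has $k$ elements and the bound $VCdim(\mathcal{H})\geq k=\lfloor\log VCdim(\mathcal{H}^\bot)\rfloor$ follows as you state.
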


In the following discussion, we restrict our focus to finite-dimensional hypothesis classes. In this paper, the main use of the VC-dimension is in consisting $\alpha$-samples.

\begin{definition}[$\alpha$-sample]
    A set of points $S\subset \mathcal{X}$ is an $\alpha$-sample for the hypothesis class $\mathcal{H} \subseteq \{0,1\}^\mathcal{X}$ under the distribution $\mathcal{D}$ over $\mathcal{X}$, if it holds that for every $h\in \mathcal{H}$:
\begin{equation*}
    \left |\mathcal{D}(h)-\frac{|S\cap h|}{|S|}\right |\leq \alpha.
\end{equation*}
\end{definition}

\begin{theorem}
    There is a constant $c$, such that for any class $\mathcal{H} \subseteq \{0,1\}^\mathcal{X}$ of VC-dimension $VCdim(\mathcal{H})$, and distribution $\mathcal{D}$ over $\mathcal{X}$, and any $\alpha >0$, $\delta >0$, if 
\begin{equation*}
    n \geq \frac{c}{\alpha ^2}\left (VCdim(\mathcal{H})+log\frac{1}{\delta}\right )
\end{equation*}
    examples are drawn i.i.d. from $\mathcal{X}$ according to the distribution $\mathcal{D}$, they constitute an $\alpha$-sample for $\mathcal{H}$ with probability at least $1-\delta$.
\end{theorem}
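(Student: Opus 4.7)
The plan is to prove this by the classical Vapnik--Chervonenkis uniform convergence argument, using the standard three-step recipe: symmetrization by a ghost sample, Sauer--Shelah bounding of the effective number of hypotheses, and a union bound combined with a Hoeffding-type tail inequality. Write $d = VCdim(\mathcal{H})$ and, for a sample $S$, $\widehat{\mathcal{D}}_S(h) = |S \cap h|/|S|$. The goal is to upper bound
\begin{equation*}
 \Pr_{S \sim \mathcal{D}^n}\!\left[\,\sup_{h \in \mathcal{H}} \bigl|\mathcal{D}(h) - \widehat{\mathcal{D}}_S(h)\bigr| > \alpha\,\right]
\end{equation*}
by $\delta$, and to solve the resulting inequality for $n$.

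First I would perform the symmetrization step: draw an independent ghost sample $S'$ of size $n$ and show that, provided $n \geq 2/\alpha^2$ (so Chebyshev controls each fixed $h$), the above probability is at most
\begin{equation*}
 2 \,\Pr_{S,S' \sim \mathcal{D}^n}\!\left[\,\sup_{h \in \mathcal{H}} \bigl|\widehat{\mathcal{D}}_{S'}(h) - \widehat{\mathcal{D}}_S(h)\bigr| > \alpha/2\,\right].
\end{equation*}
The advantage of this reduction is that the supremum now depends only on how hypotheses label the concrete $2n$ points in $S \cup S'$, not on all of $\mathcal{X}$.

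Next I would apply Sauer--Shelah: on any fixed set of $2n$ points, the number of distinct labelings realized by $\mathcal{H}$ is at most $\Phi_d(2n) \leq (2en/d)^d$ for $n \geq d$. Conditioning on $S \cup S'$ and then integrating, it suffices to bound, for a single fixed $h$, the probability that a uniformly random partition of the fixed multiset $S \cup S'$ into two halves of size $n$ yields empirical means on $h$ differing by more than $\alpha/2$. A standard Hoeffding inequality for sampling without replacement gives a bound of the form $2 \exp(-\alpha^2 n / 8)$. Applying the union bound over the at most $(2en/d)^d$ behaviors yields
\begin{equation*}
 \Pr[\,\text{bad event}\,] \;\leq\; 4\,(2en/d)^d \exp(-\alpha^2 n / 8).
\end{equation*}

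Finally I would set this at most $\delta$ and solve for $n$. Taking logarithms gives $\alpha^2 n / 8 \geq d \log(2en/d) + \log(4/\delta)$, and a routine manipulation (absorbing the $\log(n/d)$ factor into the constant $c$ via the inequality $d \log(2en/d) \leq 2d + d \log(1/\alpha)$ combined with a slightly loosened constant, or by iterating once) yields a sufficient condition of the form
\begin{equation*}
 n \;\geq\; \frac{c}{\alpha^2}\bigl(d + \log(1/\delta)\bigr),
\end{equation*}
which is exactly the stated bound.

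I expect the main obstacle to be the final step: cleanly extracting the clean $(d + \log(1/\delta))/\alpha^2$ form from the crude bound $d \log(n/d) + \log(1/\delta) \leq \alpha^2 n / 8$, since the $\log(n/d)$ factor is implicit. This is handled by picking $c$ large enough that the ansatz $n = (c/\alpha^2)(d + \log(1/\delta))$ self-consistently satisfies the inequality; the symmetrization and Sauer--Shelah steps are essentially textbook and contribute no subtlety beyond bookkeeping of constants.
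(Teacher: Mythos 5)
The paper states this theorem without proof; it is imported as a known fact from VC theory, so there is no internal argument to compare against. Your outline is the classical symmetrization / Sauer--Shelah / union-bound proof, and the first three steps (ghost sample, reduction to labelings of $2n$ points, Hoeffding for the random split) are sound and standard.

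The genuine gap is in the final step. After the union bound you must satisfy $\alpha^2 n/8 \geq d\log(2en/d) + \log(4/\delta)$, and substituting the ansatz $n = (c/\alpha^2)(d+\log(1/\delta))$ makes $\log(2en/d) \approx \log c + 2\log(1/\alpha) + \log(1 + \log(1/\delta)/d)$. The term $d\log(1/\alpha)$ therefore survives, and no universal constant $c$ (independent of $\alpha$) can absorb it: as $\alpha \to 0$ the requirement $n \gtrsim \alpha^{-2} d \log(1/\alpha)$ strictly exceeds $\alpha^{-2}(d+\log(1/\delta))$ for any fixed $c$. The inequality you invoke, $d\log(2en/d) \leq 2d + d\log(1/\alpha)$, even if corrected, still leaves a $d\log(1/\alpha)$ on the right, and "iterating once" does not remove it --- the $\log(1/\alpha)$ is genuinely present in the single-scale union-bound argument. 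What your proof actually establishes is the weaker bound $n = O\bigl(\alpha^{-2}(d\log(1/\alpha) + \log(1/\delta))\bigr)$. The sharp $\alpha^{-2}(d + \log(1/\delta))$ rate stated in the theorem is a deeper result: it requires a chaining argument over the scales of the empirical process (combined with Haussler's packing-number bound $O\bigl((1/\varepsilon)^{d}\bigr)$ at each scale), as in Talagrand's 1994 sharpening of the VC theorem. You should either cite that result directly, run the chaining argument, or accept the extra logarithmic factor --- which would then propagate into the upper bounds of Section 4.2 where this theorem is applied with $\alpha = \Delta/2$.
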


\begin{definition}[Natarajan dimension]
    Let $\mathcal{X}$ be an instance space, $\mathcal{Y}$ be the set of labels, and let $\mathcal{H}\subseteq \mathcal{Y}^\mathcal{X}$. We say that a set $S\subset \mathcal{X}$ is N-shattered by $\mathcal{H}$ if there exist $f_1$, $f_2:\mathcal{X}\rightarrow \mathcal{Y}$ such that $f_1(x)\neq f_2(x)$ for all $x\in S$ and for every $B \subset S$ there exists $g\in \mathcal{H}$ such that:
\begin{equation*}
    g(x)=f_1(x)\ for\ x\in B\ and\ g(x))=f_2(x)\ for\ x\notin B
\end{equation*}
The Natarajan dimension $Ndim(\mathcal{H})$ is the maximal size of a N-shattered set $S\subset \mathcal{X}$.
\end{definition}

\begin{lemma}
    If $|S|$ is finite, then for the sample complexity $n(\epsilon,\delta)$ of the PAC case it holds that:
\begin{equation*}
    n(\epsilon,\delta)=O\left (\frac{Ndim(\mathcal{H})\text{log}|S|}{\epsilon}\text{polylog}\left (\frac{1}{\epsilon},\frac{1}{\delta}\right )\right )\ and\ \Omega \left (\frac{Ndim(\mathcal{H})}{\epsilon}\text{polylog}\left (\frac{1}{\epsilon},\frac{1}{\delta}\right )\right )
\end{equation*}
VC-dimension is a special case of Natarajan dimension when $|S|=2$.
\end{lemma}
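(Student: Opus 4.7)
The plan is to prove the upper and lower bounds separately as multiclass generalizations of the classical PAC arguments, with the Natarajan dimension playing the role that the VC dimension plays in the binary case, and then to verify the final reduction to VC when $|\mathcal{Y}|=2$.

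For the upper bound, the essential ingredient is a multiclass Sauer–Shelah inequality (Natarajan's lemma): the restriction of $\mathcal{H}$ to any $n$ points has size at most $n^{d}\cdot k^{2d}$, where $d=Ndim(\mathcal{H})$ and $k$ is the effective label-alphabet size. I would prove this by the standard shifting argument, replacing single-label swaps with swaps between a pair $(f_1,f_2)$ of labels to match the definition of N-shattering. Once the growth function is polynomially bounded, I feed it into the usual double-sample / symmetrization argument for consistent learners: in the realizable case, an ERM that is consistent on $n$ i.i.d.\ samples fails to achieve error $\epsilon$ with probability at most $\Pi_{\mathcal{H}}(2n)\cdot 2^{-\epsilon n/2}$, and requiring this to be at most $\delta$ yields the claimed $O\!\left(\frac{Ndim(\mathcal{H})\log|S|}{\epsilon}\operatorname{polylog}(1/\epsilon,1/\delta)\right)$ bound, the $\log|S|$ factor coming from the label-alphabet exponent in Natarajan's lemma.

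For the lower bound, I would take an N-shattered set $\{x_1,\dots,x_d\}\subset\mathcal{X}$ with witnesses $f_1,f_2$ and construct a distribution that places mass $1-4\epsilon$ on $x_1$ and mass $4\epsilon/(d-1)$ on each remaining $x_i$. Because the set is N-shattered, every one of the $2^{d-1}$ labelings of $\{x_2,\dots,x_d\}$ by $f_1/f_2$ is realized by some $h^*\in\mathcal{H}$, so the adversary can select $h^*$ after watching the sample. A Chernoff/coupon-collector argument of Ehrenfeucht–Haussler–Kearns–Valiant type then shows that any learner using $n=o(d/\epsilon)$ samples leaves a constant fraction of the light points unseen and cannot distinguish between labelings that disagree on them, giving true error exceeding $\epsilon$ with probability at least $\delta$; a separate reduction to the two-hypothesis testing problem supplies the $\Omega(\log(1/\delta)/\epsilon)$ term. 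The closing sentence of the lemma follows immediately by taking $f_1\equiv 0$ and $f_2\equiv 1$, under which N-shattering collapses to ordinary shattering and $Ndim(\mathcal{H})=VCdim(\mathcal{H})$.

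The main obstacle will be the Natarajan–Sauer step in the upper bound: one has to control the exponent on the alphabet-size factor carefully, since a wasteful bound of the form $k^{O(n)}$ instead of $k^{O(d)}$ would destroy the $\log|S|$-only dependence in the sample complexity. The remaining steps — symmetrization for the upper bound and distribution-design plus anti-concentration for the lower bound — are faithful transcriptions of the binary PAC arguments and should be straightforward once the growth-function bound is in hand.
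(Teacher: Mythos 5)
The paper does not actually prove this lemma: it is stated in the ``VC theory basics'' section as imported background (it is the standard multiclass sample-complexity bound in terms of the Natarajan dimension, due to Natarajan and Ben-David--Cesa-Bianchi--Haussler--Long), so there is no in-paper argument to compare yours against. Judged on its own, your proposal is the correct and standard route: Natarajan's multiclass Sauer--Shelah bound $|\mathcal{H}_{|C}|\leq |C|^{d}\,k^{2d}$ feeding into double-sample symmetrization for the upper bound, and an Ehrenfeucht--Haussler--Kearns--Valiant distribution over an N-shattered set for the $\Omega(d/\epsilon)$ lower bound, with the two-hypothesis test supplying the $\log(1/\delta)/\epsilon$ term. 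Two small points worth making explicit: (i) the symbol $|S|$ in the lemma must be read as the size of the \emph{label set} $\mathcal{Y}$, not the shattered set $S\subset\mathcal{X}$ from the paper's Definition of the Natarajan dimension --- your proof implicitly makes this (correct) reading, and it is the only reading under which ``VC-dimension is the case $|S|=2$'' makes sense; (ii) in the binary case you should note that \emph{any} pointwise-distinct witnesses $f_1,f_2$ into a two-element label set already enumerate all $2^{|S|}$ labelings as $B$ ranges over subsets, so N-shattering and ordinary shattering coincide without needing to normalize to $f_1\equiv 0$, $f_2\equiv 1$. Your own worry about a wasteful $k^{O(n)}$ alphabet factor is the right thing to watch, and the shifting proof of Natarajan's lemma does give the needed $k^{O(d)}$.
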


\section{Lower Bound}
\subsection{Lower bound in classical PAC case}
\begin{theorem}[Lower bound for arbitrary distances]
    For an arbitrary distance function $\rho$ and a dataset of size $N$, the sample complexity of contrastive learning is $n(\epsilon,\delta)=\Omega \left (\frac{N^2}{\epsilon}\text{polylog}\left (\frac{1}{\epsilon},\frac{1}{\delta}\right )\right )$ in the classical PAC case.
\end{theorem}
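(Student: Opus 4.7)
My plan is to reduce the lower bound to a VC-dimension estimate and then invoke the preceding sample-complexity lemma. The intuition is that an ``arbitrary'' distance on $N$ points has $\Theta(N^2)$ free real parameters, one per unordered pair, so the induced hypothesis class should be rich enough to shatter $\Omega(N^2)$ triples. Since $h_\rho$ takes values in $\{-1,+1\}$, the Natarajan dimension of $\mathcal{H}$ coincides with its VC-dimension, and the lemma then yields
\[
n(\epsilon,\delta) \;=\; \Omega\!\left(\frac{VCdim(\mathcal{H})}{\epsilon}\,\text{polylog}\!\left(\tfrac{1}{\epsilon},\tfrac{1}{\delta}\right)\right);
\]
it therefore suffices to exhibit a shattered set of $\Omega(N^2)$ triples in $\mathrm{V}^3$.

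To construct one, I would partition the dataset $\mathrm{V} = \{v_1,\ldots,v_N\}$ into two halves $A = \{a_1,\ldots,a_m\}$ and $B = \{b_1,\ldots,b_m\}$ with $m = \lfloor N/2 \rfloor$, and take
\[
T \;=\; \bigl\{(a_i, b_1, b_j) : i \in [m],\ j \in \{2,\ldots,m\}\bigr\},
\]
so that $|T| = m(m-1) = \Theta(N^2)$. The label of $(a_i,b_1,b_j)$ only involves the two distances $\rho(a_i,b_1)$ and $\rho(a_i,b_j)$, and the distances relevant to different values of $i$ are pairwise disjoint; this decouples the shattering problem into $m$ independent single-block problems.

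To verify shattering, for any target labeling $\ell : T \to \{-1,+1\}$ I would define $\rho$ block by block: for each $i$, set $\rho(a_i,b_1) = m$, and for each $j \geq 2$ put $\rho(a_i,b_j) = j-1$ when $\ell(a_i,b_1,b_j) = +1$ (so that $\rho(a_i,b_1) > \rho(a_i,b_j)$) and $\rho(a_i,b_j) = m+j$ otherwise, leaving all remaining pairwise distances set arbitrarily (say to $0$). A quick sign check confirms that $h_\rho$ agrees with $\ell$ on every triple in $T$, so $VCdim(\mathcal{H}) \geq m(m-1) = \Omega(N^2)$.

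The subtlety I expect to need the most attention is the interpretation of ``arbitrary distance'': the definition in the preliminaries imposes no triangle inequality, positivity, or metric axioms on $\rho$, so the freely chosen assignment above is a legitimate element of the hypothesis class (and symmetry is automatic, since each unordered pair is assigned exactly one value). Substituting $VCdim(\mathcal{H}) = \Omega(N^2)$ into the displayed lower bound then yields the claimed $n(\epsilon,\delta) = \Omega\!\left(N^2/\epsilon \cdot \text{polylog}(1/\epsilon,1/\delta)\right)$.
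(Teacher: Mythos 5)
Your proposal is correct in substance and reaches the same intermediate target as the paper -- a shattered set of $\Omega(N^2)$ triples, followed by the Natarajan/VC sample-complexity lemma -- but via a genuinely different construction. The paper takes the family $\{(v_i,v_j,v_{j+1}) : i<j<N\}$ of triples with consecutive comparison points, and realizes an arbitrary labeling by observing that, for each anchor $v_i$, the labels orient the path $v_{i+1}-\cdots-v_N$; since an oriented path is acyclic, a topological sort yields ranks $p_j^{(i)}$ and one sets $\rho(v_i,v_j)=N+p_j^{(i)}$. You instead split $V$ into halves, fix a single reference point $b_1$, and take the bipartite family $\{(a_i,b_1,b_j)\}$, which decouples into independent per-anchor blocks where each label is controlled by one free distance $\rho(a_i,b_j)$; this avoids the graph/topological-sort machinery entirely and is the simpler argument. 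What the paper's version buys is that its realizing $\rho$ lands entirely in $[N,2N]$ and is therefore a genuine metric, so the $\Omega(N^2)$ bound survives even if ``distance function'' is read as requiring the metric axioms; your assignment (values spanning roughly $[0,2m]$, with zero distances between distinct points) is not a metric as written, and you lean on the preliminaries' permissive definition to justify this. That reliance is defensible given the paper's literal definition of $\rho:\mathrm{V}\times\mathrm{V}\to\mathbb{R}$, and in any case the gap is cosmetic: translating all of your distances into a window $[C,2C]$ (e.g.\ replace each assigned value $t$ by $2m+t$ and set the unconstrained pairs to $3m$) preserves every strict inequality you use while restoring the triangle inequality, so I would add that one sentence to make the construction as robust as the paper's.
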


\begin{proof}
Consider a graph that has a vertex representing each element in the dataset. Let the set of vertices be $V=\{v_1,...,v_N\}$. Let $S$ be the set of all possible three-element combination consisting of vertices in $V$:
\begin{equation*}
    S={\cup}_{i\in [N]}\{(v_i,v_{i+1},v_{i+2}), (v_i,v_{i+2},v_{i+3}),...,(v_i,v_{N-1},v_N)\}
\end{equation*}
If there exist a classifier can correctly label all pairs in $S$ then changing the anchor within any pair in $S$ should not affect its accuracy to correctly label the pair. So this classifier performs well on any arbitrary query from dataset $V$.

\vspace{5pt}
Then we prove that the set of samples $S$ is shattered. Let $h^*_\rho$ be a true classifier for S. For every $i\in [N]$, we define a graph $G_i=(V_i,E_i)$ where $V_i=\{v_{i+1},...,v_N\}$ and $E_i$ contains a directed edge $(v_j,v_{j+1})$ for each query $(v_i,v_j,v_{j+1})$ according to $h^*_\rho$. For example of labeling $(v_i,v_j^+,v_{j+1}^-)$, the direction of the edge between $(v_j,v_{j+1})$ is from positive $v_j$ to negative $v_{j+1}$. The graph $G_i$ is acyclic since it is an orientation of a path. Then we can topologically sort $G_i$ to obtain some topological order $p_{i+1}^{(i)},...,p_N^{(i)}$ for the vertices $v_{i+1},...,v_N$. Consider a distance function defined as $\rho (v_i,v_j):=n+p_j^{(i)}$ for all $i<j\leq n$. Thus, all distances are in the range $[N,2N]$. Therefore, this is a metric since triangle inequalities are satisfied.

\vspace{5pt}
For $i,j,k$ such that $i<j,k$ and $|j-k|=1$, $v_i,v_j,v_k$ will form a pair with anchor $v_i$ in $S$. $Y_S$ will contain a labeling for this pair, if $h^*_\rho$ give the label as $(v_i,v_j^+,v_{k}^-)$ then the edge between $(v_j,v_k)$ is from $v_j$ to $v_k$, accordingly $p_k^{(i)}>p_j^{(i)}$ which indicates $\rho (v_i,v_k)>\rho (v_i,v_j)$, vice versa. Therefore, we say that this distance function satisfies all the samples, the set of samples $S$ is shattered. An $\Omega(N^2)$ lower bound on the VC-dimension follows. By applying Lemma 2.11, we have the lower bound of sample complexity $n(\epsilon,\delta)=\Omega \left (\frac{N^2}{\epsilon}\text{polylog}\left (\frac{1}{\epsilon},\frac{1}{\delta}\right )\right )$.
\end{proof}

\begin{theorem}[Lower bound for $\ell_p$-distances]
    For any real constant $p\in (0,\infty)$, a dataset $V$ of size $N$, and the ${\ell}_p$ distance ${\rho}_p:\mathrm{V}\times \mathrm{V} \rightarrow \mathbb{R}$ in a $d$-dimensional space, the sample complexity of contrastive learning is $n(\epsilon,\delta)=\Omega \left (\frac{\text{min}(Nd,N^2)}{\epsilon}\text{polylog}\left (\frac{1}{\epsilon},\frac{1}{\delta}\right )\right )$ in the classical PAC case.
\end{theorem}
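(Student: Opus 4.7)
The plan is to separately handle the two regimes $d \geq N$ and $d < N$, each producing a shattered set matching the corresponding branch of $\text{min}(Nd, N^2)$, and then feed the resulting VC-dimension bound into Lemma 2.11 exactly as in the proof of Theorem 3.1.

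When $d \geq N$, I would reuse the shattering construction from Theorem 3.1 essentially verbatim. The distance $\rho(v_i,v_j) = N + p_j^{(i)}$ produced there is a metric with values in $[N,2N]$, and a Fr\'echet-type embedding realizes any $N$-point metric isometrically in ${\ell}_\infty^{N-1}$, hence in ${\ell}_\infty^d$ since $d \geq N$. For finite $p$ the same embedding, possibly after a small generic perturbation, preserves the sign of ${\rho}_p(x,y)-{\rho}_p(x,z)$ on each shattered triple, so the $\Omega(N^2)$ lower bound carries over and matches $\text{min}(Nd,N^2)$ in this regime.

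When $d < N$, I would use a blockwise shattering argument. Partition the dataset into $\lfloor N/d \rfloor$ disjoint blocks $V_1, \ldots, V_k$ of size $d$ and restrict attention to triples whose three entries all lie in a single block. Within each block, Theorem 3.1's construction yields $\Omega(d^2)$ shattered triples realized by a metric on $d$ points, and such a metric can be order-preservingly embedded into $\R^d$ under ${\ell}_p$ by the same Fr\'echet/perturbation device. Since each block's coordinates can be set independently while all points still live in the same $\R^d$, the labelings of triples from different blocks are independent, and summing $\Omega(d^2)$ over $\lfloor N/d \rfloor$ blocks gives $\Omega(Nd)$ shattered triples, matching $\text{min}(Nd, N^2)$. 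Applying Lemma 2.11 then yields the claimed sample-complexity lower bound.

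The step I expect to be the main obstacle is verifying that the arbitrary distance coming out of Theorem 3.1's proof is order-preservingly realizable by a genuine ${\ell}_p$-embedding in the prescribed dimension: isometric realization in ${\ell}_\infty$ follows from the Fr\'echet embedding, but for general $p$ one must either exhibit an explicit coordinate-wise construction or argue, via a Milnor--Thom-style dimension count of the semialgebraic sign-pattern arrangement, that a generic perturbation achieves every target labeling without introducing spurious ties. A secondary concern is that in the blockwise Case 2 no shattered triple is allowed to straddle a block boundary, which is why I would restrict the triple family to those entirely contained in a single $V_j$.
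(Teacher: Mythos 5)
Your overall architecture (split into the regimes $d\ge N$ and $d<N$, exhibit a shattered set of the right size in each, then invoke Lemma 2.11) matches the paper, and the blockwise bookkeeping in the $d<N$ case is structurally sound: with $\lfloor N/d\rfloor$ disjoint blocks and only intra-block triples, the blocks' images in a common $\mathbb{R}^d$ can be chosen independently, so $\Omega(d^2)$ shattered triples per block would indeed give $\Omega(Nd)$ in total. The genuine gap is the step you yourself flag as the main obstacle: realizing the abstract metric produced by Theorem 3.1 as an order-preserving $\ell_p$ embedding in the prescribed dimension. The Fr\'echet map is isometric only into $\ell_\infty$; under $\ell_p$ with $p<\infty$ the induced distances are distorted by factors up to $d^{1/p}$, and since the Theorem 3.1 metric takes all its values in $[N,2N]$ (pairwise ratios at most $2$), that distortion can completely reorder the distances from an anchor. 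A ``small generic perturbation'' cannot then recover a prescribed ordering, because the target sign pattern need not be achievable anywhere near the Fr\'echet image. Your fallback, a Milnor--Thom count of the sign-pattern arrangement, points the wrong way: such counts bound the number of realizable labelings from \emph{above} (this is exactly how the paper proves the matching upper bound in Theorem 4.2) and cannot certify that all $2^{|S|}$ labelings are realized. So as written, the shattering claim is not established for finite $p$ in either regime.

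The paper's proof of the $d<N$ case avoids embedding theory entirely with an explicit coordinatewise construction: it takes $N-d$ anchors $x_i$ and $d$ points $y_j$ with $f(y_j)$ the $j$-th standard basis vector, sets the first coordinate of each $f(x_i)$ to $\frac{1}{2}$, and encodes the desired label of the query $(x_i,y_1,y_j)$ in the $j$-th coordinate of $f(x_i)$ as a bit $b_{ij}\in\{0,1\}$; then $\norm{f(x_i)-f(y_1)}_p^p-\norm{f(x_i)-f(y_j)}_p^p=b_{ij}^p-|b_{ij}-1|^p=\pm 1$, so all $(d-1)(N-d)=\Omega(Nd)$ queries are shattered simultaneously for every $p\in(0,\infty)$, with no ties and no embedding theorems. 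If you want to salvage your blockwise route you would need a genuine ordinal-embedding result --- e.g.\ a Schoenberg-type snowflake argument that $(X,\rho^{\alpha})$ embeds isometrically in $\ell_2^{|X|-1}$ for small $\alpha>0$, which preserves the distance ordering --- and even that covers only $p=2$ without further work; the explicit construction is both simpler and fully general. For the $d\ge N$ regime the paper itself merely cites Theorem 3.1 without verifying $\ell_p$-realizability, so your concern there applies to the paper as well, but your proposed Fr\'echet-plus-perturbation fix does not close it.
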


\begin{proof}
As shown in theorem 3.1, for any distance function, the lower bound of VC-dimension is $\Omega (N^2)$. Then we discuss the case that $d<N$, and will show that a sample set of size $\Omega (Nd)$ can be shattered. Let $V$ be the dataset of size $N$. We construct a sample set S with $N-d$ anchors and $d$ other points with $d\geq 2$. We denote the dataset as $V=\{x_1,x_2,...,x_{N-d},y_1,y_2,...,y_d\}$ where $x_i$'s represent anchors and $y_j$'s represent other points. The query set is defined as following:
\begin{equation*}
    S={\cup}_{i\in [N-d]}\{(x_i,y_1,y_2), (x_i,y_1,y_3),...,(x_i,y_1,y_d)\}
\end{equation*}
There are $(d-1)(n-d)$ samples in the set $S$. Recall that the distance function is ${\ell}_p$-norm ${\rho}_p(x,y)=\norm{f(x)-f(y)}_p$ for some representation function $f:\mathrm{V} \rightarrow \mathbb{R}^d$ in dimension $d$. The corresponding hypothesis is $h_p(x,y,z)=\mathbf{sign}\left (\norm{f(x)-f(y)}_p-\norm{f(x)-f(z)}_p\right )$. 

\vspace{5pt}
Next, we define a representation function which can make the hypothesis $h_p$ satisfy labeling of queries. For points $y_j$, let the j-th coordinate of $f(y_j)$ equal to $1$, other coordinates equal to $0$. For anchors $x_i$, let the first coordinate of $f(x_i)$ be $\frac{1}{2}$. For $j\in \{2,...,d\}$, $f(x_i)_j=0$ if $(x_i,y_1,y_j)$ is labeled as $(x_i,y_1^+,y_j^-)$, otherwise $f(x_i)_j=1$ if $(x_i,y_1^-,y_j^+)$. Using this representation function $f$, $h_p\circ f$ will follow:
\begin{equation*}
    \norm{f(x_i)-f(y_1)}_p^p-\norm{f(x_i)-f(y_j)}_p^p=
    \begin{cases}
        ({\frac{1}{2}})^p-(({\frac{1}{2}})^p+1^p)<0 & if (x_i,y_1^+,y_j^-)\\
        ({\frac{1}{2}})^p-(({\frac{1}{2}})^p-1^p)>0 & if (x_i,y_1^-,y_j^+)
    \end{cases}
\end{equation*}
Hence, $h_p\circ f$ can satisfy the labeling of all queries. The sample set $S$ can be shattered. For the case that $d=1$, any set of $\lfloor \frac{n}{3}\rfloor$ disjoint queries can be satisfied, gives $\Omega (N)$ lower bound. Therefore, the lower bound of VC-dimension is $\Omega (Nd)$ when $d<N$.
\end{proof}

\subsection{Lower bound in nasty noisy case}
\begin{theorem}
    Let $\mathcal{H}$ be a non-trivial hypothesis class, $\eta$ be a noise rate. Given access to samples of the form $(x,y^+,z^-)$ from a distribution $\mathcal{D}$ and then corrupted by a nasty adversary, for any $\epsilon <2\eta$, $\delta <\frac{1}{2}$, there is no algorithm that learns a classifier from $\mathcal{H}$ with error $\epsilon$ with probability at least $1-\delta$.
\end{theorem}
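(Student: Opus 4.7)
The plan is a two-point (Le Cam-style) indistinguishability argument. We construct two ``worlds''---a target hypothesis and a distribution in each---that any nasty adversary can render statistically identical to the learner, while arranging the two targets to be $2\eta$-apart in $\mathcal{D}$-mass, so that any single output hypothesis must incur error at least $2\eta$ in one of the worlds.

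\textbf{Step 1 (two-hypothesis construction).} By non-triviality of $\mathcal{H}$, pick $h_1, h_2 \in \mathcal{H}$ that differ on some triple $t_0 \in \V^3$. Define $\mathcal{D}$ to place mass $2\eta$ on $t_0$ and the remaining mass $1-2\eta$ on an arbitrary triple on which $h_1$ and $h_2$ agree. The disagreement set is then exactly $\{t_0\}$, with $\mathcal{D}$-mass $2\eta$.

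\textbf{Step 2 (adversary strategy).} Consider two worlds: World A with target $h_1$ and World B with target $h_2$, both under $\mathcal{D}$. In each world the adversary's strategy is: for every drawn sample that lands on $t_0$, independently flip its label with probability $1/2$. The number of $t_0$-samples in $n$ honest draws is $\mathbf{Bin}(n,2\eta)$, so the total number of flips is $\mathbf{Bin}(n,\eta)$, coinciding in law with the noise budget $m \sim \mathbf{Bin}(n,\eta)$; thus the strategy is admissible and can be coupled to the budget exactly. After corruption, every sample on $t_0$ carries a uniformly random label in $\{h_1(t_0), h_2(t_0)\}$, independently across samples, while samples off $t_0$ (where $h_1$ and $h_2$ agree) are left alone. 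Hence the corrupted sample distributions in the two worlds coincide.

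\textbf{Step 3 (pigeonhole and conclusion).} Since the learner's input is identically distributed in both worlds, the output $h_\rho \in \mathcal{H}$ has the same distribution. Each hypothesis is sign-valued, so $h_\rho(t_0) \in \{+1,-1\} = \{h_1(t_0), h_2(t_0)\}$. If $h_\rho(t_0) = h_1(t_0)$, then $h_\rho$ disagrees with $h_2$ on the full mass $2\eta$ at $t_0$, giving $\mathrm{err}_B(h_\rho) \geq 2\eta$; symmetrically, if $h_\rho(t_0) = h_2(t_0)$ then $\mathrm{err}_A(h_\rho) \geq 2\eta$. Thus for every realization, $\max\{\mathrm{err}_A(h_\rho), \mathrm{err}_B(h_\rho)\} \geq 2\eta$, which yields
\[
Pr[\mathrm{err}_A(h_\rho) \geq 2\eta] + Pr[\mathrm{err}_B(h_\rho) \geq 2\eta] \geq 1.
\]
Whichever of the two is $\geq 1/2$ pinpoints a world in which the algorithm fails with probability at least $1/2 > \delta$, contradicting the hypothesis of an $(\epsilon,\delta)$-PAC learner with $\epsilon < 2\eta$ and $\delta < 1/2$.

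\textbf{Main obstacle.} The critical step is the pigeonhole in Step 3: the naive sum bound $\mathrm{err}_A(h) + \mathrm{err}_B(h) \geq 2\eta$, which holds for any $h \in \mathcal{H}$, only yields an $\eta$ lower bound on the maximum. Upgrading it to the pointwise statement $\max \geq 2\eta$ depends on the disagreement set being a single triple, where $h_\rho$ has no middle ground between the two candidate labels. The remaining delicate point is the Step 2 coupling, but the distributional identity that the count of eligible flips is itself $\mathbf{Bin}(n,\eta)$ makes the adversary's strategy budget-exact and the two post-corruption distributions truly identical---not only identical in expectation.
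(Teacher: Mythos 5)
Your proposal is correct and follows essentially the same route as the paper's proof: the same two-triple distribution with mass $2\eta$ on the disagreement point, and the same adversary that flips each occurrence of that triple with probability $\tfrac{1}{2}$ so the corrupted sample is identically distributed under either target. Your Step 3 actually makes explicit what the paper leaves as ``no algorithm can learn better than a random guess''---the pigeonhole on $h_\rho(t_0)$ and the bound $\Pr[\mathrm{err}_A \geq 2\eta] + \Pr[\mathrm{err}_B \geq 2\eta] \geq 1$---which is a welcome tightening rather than a different argument.
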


\begin{proof}
Consider there are two classifiers $h_1,h_2\in \mathcal{H}$ such that $\text{Pr}_\mathcal{D}[h_1(x,y,z)\neq h_2(x,y,z)]=2\eta$. Assume that an adversary can force the labeled examples shown to the learning algorithm to be identically distributed whether $h_1$ or $h_2$ is the target classifier. 
Assume $(x_1,y_1,z_1),(x_2,y_2,z_2)$ be two samples from distribution D that satisfy $h_1(x_1,y_1,z_1)=h_2(x_1,y_1,z_1)=(x_1,y_1^+,z_1^-)$ and $h_1(x_2,y_2,z_2)=(x_2,y_2^+,z_2^-)\neq h_2(x_2,y_2,z_2)=(x_2,y_2^-,z_2^+)$. We define the distribution D to be $D(x_1,y_1,z_1)=1-2\eta$, $D(x_2,y_2,z_2)=2\eta$ and $D(x,y,z)=0$ for all other samples. This indicates that $\text{Pr}_\mathcal{D}[h_1(x,y,z)\neq h_2(x,y,z)]=\text{Pr}_\mathcal{D}[(x_2,y_2,z_2)]=2\eta$. 

\vspace{5pt}
Then we describe a nasty adversary strategy. Let $n$ be the size of samples required by the learning algorithm. First, adversary i.i.d draws $n$ samples from the distribution $\mathcal{D}$ and labels them according to the target classifier. Then, for each occurrence of $(x_1,y_1,z_1)$, the adversary remain it unchanged, while for each occurrence of $(x_2,y_2,z_2)$, the adversary flips the label with probability $\frac{1}{2}$. The modified samples indeed according to the binomial distribution $Bin(n,\eta)$, since distribution $\mathcal{D}$ is known to the adversary and $\text{Pr}[\text{Flip label of }(x_2,y_2,z_2)]=2\eta \cdot \frac{1}{2}=\eta$. The modified sample set $S$ is given to the learning algorithm. 

\vspace{5pt}
Whether the target classifier is $h_1$ or $h_2$, labeled samples in the given set $S$ are distributed according to the following distribution:
\begin{equation*}
    \begin{cases}
        \text{Pr}[(x_1,y_1^+,z_1^-)]=1-2\eta,\\
        \text{Pr}[(x_2,y_2^+,z_2^-)]=\eta,\\
        \text{Pr}[(x_2,y_2^-,z_2^+)]=\eta.\\
    \end{cases}
\end{equation*}
Therefore, based on this sample set, it is impossible to differentiate between $h_1$ and $h_2$, no algorithm can learn better than a random guess.
\end{proof}

\begin{theorem}
    For any non-trivial hypothesis class $\mathcal{H}$, any noise rate $\eta >0$, confidence parameter $0<\delta <\frac{1}{342}$ and $0<\Delta <\frac{1}{12}\eta$, the sample size needed of contrastive learning with accuracy $\epsilon =2\eta +\Delta$ in the presence of nasty noise with noise rate $\eta$ is $\Omega (\frac{\eta}{{\Delta}^2})$.
\end{theorem}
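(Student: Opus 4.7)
My plan is to extend the construction from Theorem 3.3 by thickening the disagreement region from mass $2\eta$ to mass $2\eta+\Delta$, so the adversary can no longer completely mask the difference between two candidate targets, but the residual signal of size $\Delta$ is small enough that detecting it requires many samples. Using non-triviality of $\mathcal{H}$, I pick $h_1,h_2\in\mathcal{H}$ together with triples $(x_1,y_1,z_1)$ where they agree and $(x_2,y_2,z_2)$ where they disagree, and define $\mathcal{D}$ to put mass $1-2\eta-\Delta$ on $(x_1,y_1,z_1)$ and $2\eta+\Delta$ on $(x_2,y_2,z_2)$. The adversary leaves every occurrence of $(x_1,y_1,z_1)$ unchanged and independently flips the label of every occurrence of $(x_2,y_2,z_2)$ with probability $\eta/(2\eta+\Delta)$; the per-sample flip probability is then exactly $\eta$, so the number of corrupted samples follows $\mathbf{Bin}(n,\eta)$ as the noise model demands.

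After the adversary acts, the only difference between the $h_1$-world and the $h_2$-world is the conditional Bernoulli $\pm$-label on $(x_2,y_2,z_2)$: its parameter is $(\eta+\Delta)/(2\eta+\Delta)$ under one target and $\eta/(2\eta+\Delta)$ under the other, a gap of $\Delta/(2\eta+\Delta)=\Theta(\Delta/\eta)$. Since $h_1$ and $h_2$ disagree on a region of mass $2\eta+\Delta$, any learner achieving error below $\epsilon=2\eta+\Delta$ against the true target must correctly identify which of $h_1,h_2$ it is, so the problem reduces to a two-sample hypothesis test between two Bernoullis with parameter gap $\Theta(\Delta/\eta)$, given only the $\Theta(\eta n)$ samples of type $(x_2,y_2,z_2)$ within a total of $n$. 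A Chernoff or KL lower bound for Bernoulli distinguishability then forces the informative sample count to be $\Omega(\eta^2/\Delta^2)$, i.e., $n=\Omega(\eta/\Delta^2)$, which is the claimed rate.

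The main obstacle will be pinning down the explicit constants $\delta<1/342$ and $\Delta<\eta/12$. The crude KL estimate $\mathrm{KL}(p\Vert q)\approx(p-q)^2/[2p(1-p)]$ already delivers the right rate, but matching the quoted thresholds requires an explicit second-order bound on the KL divergence of the two Bernoullis together with a clean Chernoff analysis of the likelihood-ratio test. The condition $\Delta<\eta/12$ is what keeps higher-order terms in the KL expansion negligible relative to the leading $(p-q)^2$ signal (so the informative sample count needed is genuinely $\Theta(\eta^2/\Delta^2)$ rather than a larger quantity), while $\delta<1/342$ is the threshold below which the Chernoff tail for the log-likelihood ratio still forces a constant error probability when $n<c\eta/\Delta^2$ for the appropriate absolute constant $c$. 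I would also need to verify that the concentration of $\mathbf{Bin}(n,\eta)$ around its mean is tight enough to preserve the bounds after a union over the two hypothesis worlds, which is routine but contributes to the constants.
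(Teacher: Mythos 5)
Your hard instance is exactly the paper's: the same two-triple distribution with mass $1-(2\eta+\Delta)$ on a point where $h_1,h_2$ agree and $2\eta+\Delta$ on a point where they disagree, and the same adversary that flips the disagreement triple's label with probability $\eta/(2\eta+\Delta)$ so that the corruption count is $\mathbf{Bin}(n,\eta)$. Where you genuinely diverge is the finishing step. The paper passes to the Bayes-optimal majority vote on the disagreement triple and then invokes a binomial \emph{anti}-concentration claim (its Claim 3.5, a reverse-Chernoff statement giving probability greater than $1/19$ of a $\sqrt{Npq}$-sized deviation) to show the majority is wrong with probability at least $\tfrac{1}{19}(1-e^{-1/17})>\tfrac{1}{342}$ whenever $n<\tfrac{17\eta(1-\eta)}{37\Delta^2}$; the conditions $\delta<\tfrac{1}{342}$ and $\Delta<\tfrac{1}{12}\eta$ are artifacts of making the deviation window $[\tfrac{37(2\eta+\Delta)^2}{\eta(\eta+\Delta)},\tfrac{36\eta(\eta+\Delta)}{37\Delta^2}]$ nonempty and of an auxiliary subsampling argument that removes a lower bound on $n$. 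Your Le Cam/KL route is equally valid and arguably cleaner: the single-sample KL between the two worlds is $\Delta\log\tfrac{\eta+\Delta}{\eta}\le\Delta^2/\eta$, so $n=o(\eta/\Delta^2)$ keeps the total variation of the $n$-fold products bounded away from $1$ and no test (hence no learner, since any $h$ with error below $\epsilon=2\eta+\Delta$ must label the disagreement triple correctly, that triple alone carrying mass $2\eta+\Delta$) succeeds with probability $1-\delta$ for small constant $\delta$. Note that your route does not actually need $\Delta<\tfrac{1}{12}\eta$ for the rate --- $\log(1+t)\le t$ suffices, so your guess that this condition controls higher-order KL terms is not where it comes from in the paper --- and it will produce its own explicit constants rather than reproduce $1/342$; what the paper's anticoncentration argument buys is precisely those stated thresholds, while yours buys a shorter proof that also dispenses with the paper's separate step for removing the assumption $n\ge 51/\epsilon$.
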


\begin{proof}
Consider a similar case shown in proof of Theorem 3.3. There are two classifier $h_1,h_2\in \mathcal{H}$. Let $(x_1,y_1,z_1),(x_2,y_2,z_2)$ be two samples from distribution D that satisfy $h_1(x_1,y_1,z_1)=h_2(x_1,y_1,z_1)=(x_1,y_1^+,z_1^-)$ and $h_1(x_2,y_2,z_2)=(x_2,y_2^+,z_2^-)\neq h_2(x_2,y_2,z_2)=(x_2,y_2^-,z_2^+)$. We define the distribution D to be $D(x_1,y_1,z_1)=1-\epsilon$, $D(x_2,y_2,z_2)=\epsilon$ and $D(x,y,z)=0$ for all other samples. The target classifier $h^*$ can be either $h_1$ or $h_2$. 

\vspace{5pt}
The nasty adversary strategy is also similar. Let $n$ be the size of samples required by the learning algorithm. First, adversary i.i.d draws $n$ samples from the distribution $\mathcal{D}$ and labels them according to the target classifier. Then, for each occurrence of $(x_1,y_1,z_1)$, the adversary remain it unchanged, while for each occurrence of $(x_2,y_2,z_2)$, the adversary flips the label with probability $\frac{\eta}{\epsilon}$. The modified samples indeed according to the binomial distribution $Bin(n,\eta)$, since distribution $\mathcal{D}$ is known to the adversary and $\text{Pr}[\text{Flip label of }(x_2,y_2,z_2)]=\epsilon \cdot \frac{\eta}{\epsilon}=\eta$. The modified sample set $S$ is given to the learning algorithm. We denote $h^*((x_2,y_2,z_2))=(x_2,y_2^+,z_2^-)$, labeled samples in the given set $S$ are distributed according to the following distribution:
\begin{equation*}
    \begin{cases}
        \text{Pr}[(x_1,y_1^+,z_1^-)]=1-\epsilon,\\
        \text{Pr}[(x_2,y_2^+,z_2^-)]=\epsilon \cdot \left (1-\frac{\eta}{\epsilon}\right )=\epsilon -\eta=\eta +\Delta,\\
        \text{Pr}[(x_2,y_2^-,z_2^+)]=\epsilon \cdot \frac{\eta}{\epsilon}=\eta.\\
    \end{cases}
\end{equation*}
We will show that an algorithm that create a classifier from $\mathcal{H}$ with accuracy $\epsilon$ using samples drawn from distribution $\mathcal{D}$ and the size of samples is $m<\frac{17\eta (1-\eta)}{37{\Delta}^2}$. Let $A$ be the algorithm. $A$ outputs a classifier $h$ using samples of size $n$. We denote the expected error of $h$ as $\text{err}_A(n)$. Let $B$ be the Bayes strategy: if majority label of $(x_2,y_2,z_2)=(x_2,y_2^+,z_2^-)$, outputs $h_1$; otherwise, outputs $h_2$. We denote the expected error of the output classifier as $\text{err}_B(n)$. Since $B$ minimizes the probability of choosing the wrong classifier, we have $\text{err}_B(n)\leq \text{err}_A(n)$ for all $m$. 

\vspace{5pt}
Then we define two events over runs of B: Let $M$ be the number of sample $(x_2,y_2,z_2)$ shown in the sample set of size $n$. Let $m$ be the number of samples which are corrupted by the adversary. $\text{BAD}_1$ is the event that $m\geq \lceil \frac{M}{2}\rceil +1$. $\text{BAD}_2$ is the event that $M\leq \frac{36\eta (\eta +\Delta)}{37{\Delta}^2}$. Clearly if $\text{BAD}_1$ happens, the majority will be the modified label of $(x_2,y_2,z_2)$, so B will output wrong classifier. To give the bound of $\text{Pr}[\text{BAD}_1]$, we will show lower bounds for $\text{Pr}[\text{BAD}_1|\text{BAD}_2]$ and $\text{Pr}[\text{BAD}_2]$ respectively.

\paragraph{$\text{Pr}[\text{BAD}_2]$}
Note that $M$ is a random variable distributed by the binomial distribution with parameters $n$ and $\epsilon$, i.e.$M\sim \mathbf{Bin}(n,\epsilon)$. The expected value of $M$ is $E[M]=n\epsilon$. Recall that $n$ is upper bounded by $\frac{17\eta (1-\eta)}{37{\Delta}^2}$. We have that
\begin{align*}
    \text{Pr}[\text{BAD}_2] &= \text{Pr}[M\leq \frac{36\eta (\eta +\Delta)}{37{\Delta}^2}]\\
    &\geq \text{Pr}[M\leq \frac{18\eta\epsilon}{37{\Delta}^2}]\\
    &\geq \text{Pr}[M\leq \frac{18}{17}n\epsilon]\\
    &=\text{Pr}[\frac{M}{E[M]}-1\leq \frac{1}{17}].
\end{align*}
Now we assume that $n\geq \frac{51}{\epsilon}$. Thus, using Chernoff's inequality, we have the lower bound of $\text{Pr}[\text{BAD}_2]$: 
\begin{align*}
    \text{Pr}[\text{BAD}_2] \geq 1-e^{-1/17}.
\end{align*}

\paragraph{$\text{Pr}[\text{BAD}_1|\text{BAD}_2]$}
For now we assume that $\text{BAD}_2$ holds, namely that $M\leq \frac{36\eta (\eta +\Delta)}{37{\Delta}^2}$. And our proof uses the following result in probability theory:
\begin{claim}
    Let $S_{N,p}$ be a random variable distributed by the binomial distribution with parameters $N$ and $p$, and let $q=1-p$. For all $N>\frac{37}{pq}$:
    \begin{align*}
        &\text{Pr}[S_{N,p}\geq \lfloor Np\rfloor +\lfloor \sqrt{Npq-1}\rfloor]>\frac{1}{19},\\
        &\text{Pr}[S_{N,p}\leq \lceil Np\rceil -\lceil \sqrt{Npq-1}\rceil]>\frac{1}{19}.
    \end{align*}
\end{claim}

By Claim 3.5 with $m\sim \mathbf{Bin}(M,\frac{\eta}{\epsilon})$, if $M\geq \frac{37(2\eta +\Delta)^2}{\eta(\eta +\Delta)}$, we have
\begin{equation*}
    \text{Pr}\left [m\geq \left \lfloor M\frac{\eta}{2\eta +\Delta}\right \rfloor +\left \lfloor \sqrt{M\frac{\eta (\eta +\Delta)}{(2\eta +\Delta)^2}-1}\right \rfloor \right ]>\frac{1}{19}
\end{equation*}
then we consider the following inequality:
\begin{equation*}
    \left \lfloor M\frac{\eta}{2\eta +\Delta}\right \rfloor +\left \lfloor \sqrt{M\frac{\eta (\eta +\Delta)}{(2\eta +\Delta)^2}-1}\right \rfloor \geq \left \lceil \frac{M}{2}\right \rceil +1
\end{equation*}
which is implied by
\begin{equation*}
    M\frac{\eta}{2\eta +\Delta}+\sqrt{M\frac{\eta (\eta +\Delta)}{(2\eta +\Delta)^2}-1} \geq \frac{M}{2}+3,
\end{equation*}
which is implied by the following two conditions:
\begin{align*}
    &\frac{1}{2}\sqrt{M\frac{\eta (\eta +\Delta)}{(2\eta +\Delta)^2}-1} \geq 3,\\
    &\frac{1}{2}\sqrt{M\frac{\eta (\eta +\Delta)}{(2\eta +\Delta)^2}-1} \geq \frac{1}{2}M\frac{\Delta}{2\eta +\Delta}.
\end{align*}

These two conditions holds if we assume $\frac{37(2\eta +\Delta)^2}{\eta(\eta +\Delta)}\leq M\leq \frac{36\eta (\eta +\Delta)}{37{\Delta}^2}$. With $M$ in this range, it follows that
\begin{equation*}
    \text{Pr}\left [m\geq \left \lceil \frac{M}{2}\right \rceil +1 \right ]=\text{Pr}[\text{BAD}_1|\text{BAD}_2]>\frac{1}{19}
\end{equation*}

Now, we consider the lower bound of $M$ may be removed. Since $m\sim \mathbf{Bin}(M,\frac{\eta}{\epsilon})$, with lower M, the probability that m will be at least $\lceil \frac{M}{2} \rceil +1$ becomes higher. Therefore, the lower bound of $M$ can be safely removed. We have $\text{Pr}[\text{BAD}_1]\geq \text{Pr}[\text{BAD}_1|\text{BAD}_2]\cdot \text{Pr}[\text{BAD}_2]\geq \frac{1}{19}\cdot (1-e^{-1/17})>\frac{1}{342}$ under assumption of $n\geq \frac{51}{\epsilon}$.

\vspace{5pt}
Next, we consider that the lower bound of $n$ may be removed. By contradiction, firstly we assume the lower bound of $n$ is required. This means that there exists an algorithm $A_1$ which can learn the class $\mathcal{H}$ with accuracy $\eta +2\Delta$ with probability at least $1-\delta$ under noise rate $\eta$, using samples of size $n_1\leq \frac{51}{\epsilon}$. In addition, we have $\frac{51}{\epsilon}<\frac{17\eta (1-\eta)}{37{\Delta}^2}$ by the condition on $\Delta$. We assume that there exists an algorithm $A_2$ using samples of size $\frac{51}{\epsilon}<n_2<\frac{17\eta (1-\eta)}{37{\Delta}^2}$ works as follows: $A_2$ randomly draw $n_1$ samples from original sample set of $n_2$, and feeds them to $A_1$. Then $A_2$ contradicts our discussion above. Thus, no such algorithm $A_1$ may exist, and so the lower bound of $n$ can be safely removed. 

\vspace{5pt}
Finally, we have that $\text{Pr}[\text{err}_A(n)>\epsilon]\geq \text{Pr}[\text{err}_B(n)>\epsilon]\geq \text{Pr}[\text{BAD}_1]>\frac{1}{342}$ under assumption $n<\frac{17\eta (1-\eta)}{37{\Delta}^2}$. This indicates the lower bound of sample complexity shown in Theorem 3.4.
\end{proof}

\begin{theorem}
    For any non-trivial hypothesis class $\mathcal{H}$ with $VCdim\geq 3$, any $0<\epsilon \leq \frac{1}{8}$, $0<\delta <\frac{1}{12}$ and $0<\Delta <\epsilon$, the sample size needed of contrastive learning with accuracy $\epsilon$ in the presence of nasty noise with noise rate $\eta =\frac{1}{2}(\epsilon -\Delta)$ is $\Omega (\frac{VCdim}{{\Delta}})$.
\end{theorem}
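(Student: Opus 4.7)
My plan is to upgrade the two-hypothesis construction of Theorem 3.4 by running essentially the same adversarial scheme in parallel across all $d = VCdim(\mathcal{H})$ coordinates of a shattered set, and to aggregate the per-coordinate Bayes-risk bound into a global error bound via a random-target argument. Intuitively, along each ``shattered coordinate'' the learner solves a biased binary decision problem at bias $\Theta(\Delta/\epsilon)$, and there are $d$ such sub-problems to solve, so the total informational work is $d$ times that of the single-coordinate lower bound.

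I would first set up the hard instance. Fix a shattered set $S = \{s_1, \dots, s_d\} \subseteq V^3$; by shattering, for every sign pattern $\sigma \in \{\pm 1\}^d$ there is a hypothesis $h^*_\sigma \in \mathcal{H}$ with $h^*_\sigma(s_i) = \sigma_i$. Using nontriviality I also fix a ``trivial'' triple $s_0$ on which all $h^*_\sigma$ agree. Let $\mathcal{D}$ put mass $1-\alpha$ on $s_0$ and uniform mass $\alpha/d$ on each $s_i$, with $\alpha := \epsilon + \Delta$ chosen so that the total shattered mass strictly exceeds the error budget $\epsilon$ (by exactly $\Delta$) while still $2\eta < \alpha$. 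The nasty adversary then flips the label of each sample landing on some $s_i$ independently with probability $p := \eta/\alpha = (\epsilon-\Delta)/(2(\epsilon+\Delta)) < 1/2$; in expectation $n\alpha p = n\eta$ samples are modified, matching the $\mathrm{Bin}(n,\eta)$ budget exactly as in Theorem 3.4, and each observed label at $s_i$ is a Bernoulli trial with bias $1 - 2p = \Theta(\Delta/\epsilon)$ toward $\sigma_i$.

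Next I would choose $\sigma$ uniformly at random and declare $h^*_\sigma$ the target. Conditional on $k_i$ observations of $s_i$, the learner's optimal posterior for $\sigma_i$ is a biased majority vote, and by the same anti-concentration tool used in Theorem 3.4 (Claim 3.5), the learner misclassifies $\sigma_i$ with probability bounded below by an absolute constant $c_0$ unless $k_i = \Omega(1/(1-2p)^2) = \Omega((\epsilon/\Delta)^2)$. Assuming for contradiction that $n \leq c_1 d/\Delta$ for a small constant $c_1$, the expected number of samples at each $s_i$ is $n\alpha/d = O(c_1 \epsilon/\Delta)$, which is much less than the $\Omega((\epsilon/\Delta)^2)$ threshold once $c_1$ is sufficiently small; a Chernoff tail bound then shows that, with high probability over the draw, a majority of the $s_i$'s are under-sampled. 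Summing the per-$s_i$ expected error contributions $(\alpha/d)\cdot c_0$ over these under-sampled indices produces an expected total error strictly greater than $\epsilon + \Theta(\Delta)$, which a second Markov inequality (using $\delta < 1/12$) converts into $\Pr[\mathrm{err} > \epsilon] > \delta$. Averaging over $\sigma$ then produces a specific target $\sigma^*$ witnessing failure with probability $>\delta$, contradicting the PAC-with-nasty-noise guarantee.

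The most delicate step is the concentration argument that aggregates many weakly-informative coin tosses into a high-probability (rather than merely in-expectation) lower bound on the error. I expect to have to disentangle three sources of randomness --- the draw of the underlying sample, the adversary's independent flip coins, and the random target $\sigma$ --- while also verifying that the adversary's corruption count is genuinely dominated by $\mathrm{Bin}(n,\eta)$ (possibly after padding with a small conditional resampling step to convert an expected rate into a binomial one). Tuning the absolute constants in $\alpha$, $p$, and $c_1$ so that the $\Delta$ slack above $\epsilon$ survives both concentration losses is the main source of difficulty; the probabilistic ingredients are standard once the parameters are set.
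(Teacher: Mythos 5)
Your construction has an arithmetic gap that I do not see how to repair without changing the distribution. You put total mass $\alpha = \epsilon + \Delta$ on the shattered points and argue that the learner errs on each with probability at least an absolute constant $c_0$ (coming from a Claim~3.5--type anti-concentration bound, so $c_0$ is on the order of $1/19$). But then the expected error you can extract is at most $c_0\,\alpha = c_0(\epsilon+\Delta)$, and your target of ``expected total error strictly greater than $\epsilon + \Theta(\Delta)$'' is impossible: the entire probability mass available for errors is only $\epsilon+\Delta$, so you would need $c_0 > 1$. Even to clear the threshold $\epsilon$ the learner must err on a fraction $\epsilon/(\epsilon+\Delta)$ of the shattered mass, which tends to $1$ as $\Delta \to 0$; no constant-probability-per-coordinate argument can deliver that. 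A second, related problem is that ``under-sampled'' does not imply ``errs with constant probability'': your flip probability is $p = \eta/\alpha$, and when $\Delta$ is close to $\epsilon$ we have $\eta \approx 0$, so a point observed even once is decoded correctly with probability $1-p \approx 1$. The anti-concentration claim requires the number of observations $k_i$ to exceed a threshold depending on $p$, so the constant-error guarantee holds only for points seen in a middle range of $k_i$, not for the many points seen once or twice.

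The paper's proof sidesteps both problems by splitting the error budget asymmetrically. It reserves a single heavy shattered point of mass exactly $2\eta = \epsilon - \Delta$ on which the adversary flips the label with probability $\frac{1}{2}$, creating complete ambiguity: with probability $\frac{1}{2}$ the learner errs there and pays the full $2\eta$ in one shot. Only the remaining sliver $\Delta$ of error has to come from the VC-dimension, and for that the paper uses $d-2$ light points of total mass $8\Delta$ and shows via Markov's inequality that, when $n \leq \frac{d-2}{32\Delta}$, with probability $> \frac{1}{2}$ at least half of them are never sampled at all; on unseen points of a uniformly random shattered target the error probability is exactly $\frac{1}{2}$, and Claim~3.7 converts this into erring on at least a $\frac{1}{8}$ fraction of the $8\Delta$ mass with probability $> \frac{1}{3}$. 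Multiplying the three events gives failure probability $> \frac{1}{12} > \delta$. If you want to keep your parallel-coordinates framing, you would need to adopt the same asymmetric split: concentrate all the noise on one heavy point so that the $2\eta$ portion of the error is obtained with probability $\frac{1}{2}$ outright, and use the shattered coordinates only to manufacture the extra $\Theta(\Delta)$, via entirely unseen points rather than noisy majority votes.
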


\begin{proof}
Let V be a dataset. Let S be a set of samples in the form of $(x,y,z)$ where $x,y,z\in V$ shattered by a hypothesis class $\mathcal{H}$. We may assume that $\mathcal{H}$ is the power set of $S$. So we denote VC-dimension of $\mathcal{H}$ as $d$ in this proof, and let $S={(x_1,y_1,z_1),...,(x_d,y_d,z_d)}$.
Define a probability distribution $\mathcal{D}$ on the set $S$ as following:
\begin{equation*}
    \begin{cases}
        \mathcal{D}((x_1,y_1,z_1))=1-2\eta -8\Delta\\
        \mathcal{D}((x_2,y_2,z_2))=...=\mathcal{D}((x_{d-1},y_{d-1},z_{d-1}))=\frac{8\Delta}{d-2}\\
        \mathcal{D}((x_d,y_d,z_d))=2\eta
\end{cases}
\end{equation*}
We proof by contradiction. First assume that at most $n=\frac{d-2}{32\Delta}$ samples are used by the algorithm to learn $\mathcal{H}$. Then we define the nasty adversary strategy as follows: The adversary picks a target classifier $h^*$ uniformly at random from $\mathcal{H}$, then generates a sample set of size $n$ and label them by $h^*$. Then the adversary flips the label on each sample $(x_d,y_d,z_d)$ with probability $\frac{1}{2}$, while other samples remain unchanged. Then we show that there exists a target function for which the algorithm does not find $\epsilon$-accurate classifier with probability at least $\delta$. This is suffices to show that the algorithm create a classifier h such that $\text{Pr}_{\mathcal{D}}[h(x,y,z)\neq h^*(x,y,z)]$ with probability at least $1-\delta$. 

\vspace{5pt}
Let $\text{BAD}_1$ be the event that at least half of samples $(x_2,y_2,z_2),...(x_{d-1},y_{d-1},z_{d-1})$ are not drawn into the sample set. Thus, these samples are not seen by the algorithm. Given $\text{BAD}_1$, assume the set of $\frac{d-2}{2}$ unseen samples with lowest indices, and define $\text{BAD}_2$ as the event that h output by the algorithm misclassifies at least $\frac{d-2}{8}$ samples from the set of unseen samples. Finally, let $\text{BAD}_3$ be the event that $x_d$ is misclassified. Clearly,  $\text{BAD}_1 \wedge \text{BAD}_2 \wedge \text{BAD}_3$ implies that the classifier $h$ has error at least $\epsilon$. Since $h$ gives wrong labels on $\frac{d-2}{8}$ samples which have weight $\frac{8\Delta}{d-2}$ and on the sample $(x_d,y_d,z_d)$ which has weight $2\eta$, thus, the error of $h$ is at least $\Delta +2\eta =\epsilon$. Therefore, if an algorithm can learn the class with probability at least $1-\delta$, then $\text{Pr}[\text{BAD}_1 \wedge \text{BAD}_2 \wedge \text{BAD}_3]<\delta$ must hold. 

\paragraph{$\text{Pr}[\text{BAD}_1]$}
Denote $n_{seen}$ as the number of samples from $(x_2,y_2,z_2),...(x_{d-1},y_{d-1},z_{d-1})$ seen by algorithm. Since at most $\frac{d-2}{32\Delta}$ samples are given to the algorithm, the expected value of $n_{seen}$ is at most $(d-2)\cdot \frac{8\Delta}{d-2}\cdot \frac{d-2}{32\Delta}=\frac{d-2}{4}$. By Markov's inequality, with probability at least $\frac{1}{2}$, the number of seen samples is less than $\frac{d-2}{2}$, i.e.$\text{Pr}[n_{seen}\geq \frac{d-2}{2}]\leq \frac{E[n_{seen}]}{\frac{d-2}{2}}\leq \frac{1}{2}$. Therefore, $\text{Pr}[\text{BAD}_1]>\frac{1}{2}$.

\paragraph{$\text{Pr}[\text{BAD}_2|\text{BAD}_1]$}
Since the target $h^*$ is picked uniformly at random, hence $h$ will misclassify each unseen samples with probability $\frac{1}{2}$. Given $\text{BAD}_1$, the probability $\text{Pr}[\text{BAD}_2|\text{BAD}_1]$ is equivalent to: if we flips a fair coin $\frac{d-2}{2}$ times, the probability that the number of heads observed is at least $\frac{d-2}{8}$. 

\begin{claim}
    For every $0<\beta <\alpha \leq 1$, for every random variable $S\in [0,N]$ with $E[S]=\alpha N$, it holds that $\text{Pr}[S\geq \beta N]>\frac{\alpha -\beta}{1-\beta}$
\end{claim}
Using Claim 3.7, $\text{Pr}[\text{number of heads observed}\geq \frac{1}{4}\cdot \frac{d-2}{2}]>\frac{\frac{1}{2}-\frac{1}{4}}{1-\frac{1}{4}}=\frac{1}{3}$. Therefore, we have $\text{Pr}[\text{BAD}_2|\text{BAD}_1]>\frac{1}{3}$.

\paragraph{$\text{Pr}[\text{BAD}_3|\text{BAD}_1 \wedge \text{BAD}_2]$}
Recall that the adversary modified label of $(x_d,y_d,z_d)$ with probability $\frac{1}{2}$, thus, $\text{Pr}[\text{BAD}_3]=\frac{1}{2}$. And $\text{BAD}_3$ is independent of  $\text{BAD}_1$ and  $\text{BAD}_2$, therefore, $\text{Pr}[\text{BAD}_3|\text{BAD}_1 \wedge \text{BAD}_2]=\frac{1}{2}$. Finally, we have

\begin{align*}
    &\text{Pr}[\text{BAD}_1 \wedge \text{BAD}_2 \wedge \text{BAD}_3]\\
    &\text{Pr}[\text{BAD}_1]\cdot \text{Pr}[\text{BAD}_2|\text{BAD}_1]\cdot \text{Pr}[\text{BAD}_3|\text{BAD}_1 \wedge \text{BAD}_2]\\
    &>\frac{1}{2}\cdot \frac{1}{3}\cdot \frac{1}{2}\\
    &\geq \delta    
\end{align*}

\end{proof}

\begin{corollary}
    For any non-trivial hypothesis class $\mathcal{H}$ with $VCdim\geq 3$, any $0<\epsilon \leq \frac{1}{8}$, $0<\delta <\frac{1}{342}$ and $0<\Delta <\epsilon$, the sample size needed of contrastive learning with accuracy $\epsilon$ in the presence of nasty noise with noise rate $\eta =\frac{1}{2}(\epsilon -\Delta)$ is $\Omega (\frac{\eta}{{\Delta}^2}+\frac{VCdim}{{\Delta}})$.
\end{corollary}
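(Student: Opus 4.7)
The plan is to derive the corollary directly from Theorems 3.4 and 3.6 by combining their two sample complexity lower bounds. The key observation is that whenever two lower bounds $\Omega(a)$ and $\Omega(b)$ both apply to the same quantity, the identity $\max(a,b) \geq (a+b)/2$ yields an $\Omega(a+b)$ bound. So the task reduces to verifying that both theorems' hypotheses are met under the corollary's conditions, and resolving a small parameter-regime mismatch where one of them fails.

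First I would check that Theorem 3.6 applies: it requires $VCdim \geq 3$, $0 < \epsilon \leq 1/8$, $0 < \delta < 1/12$, and $0 < \Delta < \epsilon$ with noise rate $\eta = (\epsilon - \Delta)/2$. All of these are implied by the corollary's hypotheses, since $1/342 < 1/12$, so Theorem 3.6 yields the $\Omega(VCdim/\Delta)$ piece. Theorem 3.4, after rewriting its accuracy parameter as $\epsilon = 2\eta + \Delta$, gives exactly the same relation between $\eta$, $\Delta$, and $\epsilon$. Its conditions $\eta > 0$ and $\delta < 1/342$ are immediate, but its constraint $0 < \Delta < \eta/12$, substituted into $\eta = (\epsilon - \Delta)/2$, forces $\Delta < \epsilon/25$, which is strictly stronger than the corollary's assumption $\Delta < \epsilon$.

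The main obstacle is therefore handling the range $\Delta \geq \epsilon/25$ where Theorem 3.4 cannot be invoked directly. I would dispose of this with a short case split. If $\Delta < \epsilon/25$, both theorems apply, and summing the two bounds gives $\Omega(\eta/\Delta^2 + VCdim/\Delta)$. If $\Delta \geq \epsilon/25$, I claim the $\eta/\Delta^2$ term is already absorbed by $VCdim/\Delta$: since $\eta < \epsilon/2$ and $1/\Delta \leq 25/\epsilon$, we get $\eta/\Delta^2 \leq (25/2)/\Delta$, while $VCdim \geq 3$ yields $VCdim/\Delta \geq 3/\Delta$; hence $\eta/\Delta^2 = O(VCdim/\Delta)$, and Theorem 3.6 alone delivers $\Omega(VCdim/\Delta) = \Omega(\eta/\Delta^2 + VCdim/\Delta)$ in this range. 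Combining the two cases gives the claimed lower bound. I expect the bookkeeping of parameter conditions, rather than any genuinely hard step, to be the only delicate part of the argument.
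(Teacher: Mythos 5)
Your proposal is correct and matches the paper's (implicit) approach: the paper states this corollary with no written proof, intending exactly the combination of Theorems 3.4 and 3.6 that you carry out. Your extra care in isolating the regime $\Delta \geq \epsilon/25$, where Theorem 3.4's hypothesis $\Delta < \eta/12$ fails and the $\eta/\Delta^2$ term is absorbed into $VCdim/\Delta$, is a detail the paper glosses over, and your handling of it is sound.
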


\begin{theorem}
    For an arbitrary distance function $\rho$, a dataset of size $N$, and any noise rate $\eta >0$, the sample complexity of contrastive learning with accuracy $\epsilon =2\eta +\Delta$ in the presence of nasty noise with noise rate $\eta$ is $n(\epsilon ,\delta ,\Delta)=\Omega (\frac{\eta}{\Delta ^2}+\frac{N^2}{\Delta})$.
\end{theorem}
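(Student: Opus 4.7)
The plan is to reduce Theorem 3.8 to an application of Corollary 3.7 instantiated with the VC-dimension lower bound from Theorem 3.1. First I would verify that the parameter regime lines up: Theorem 3.8 sets $\epsilon = 2\eta + \Delta$, which is exactly the parameterization $\eta = \tfrac{1}{2}(\epsilon - \Delta)$ used in Corollary 3.7, so for any $\delta < 1/342$ and any $0 < \Delta < \epsilon \le 1/8$ the corollary applies verbatim and already supplies the first term $\Omega(\eta/\Delta^2)$ of the desired bound.

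Next I would obtain the second term $\Omega(N^2/\Delta)$ by invoking the construction in the proof of Theorem 3.1. For a dataset $V$ of size $N$, that proof exhibits a set of triples $S \subset V^3$ of size $\Omega(N^2)$ together with a family of distance functions realizing every possible labeling of $S$; equivalently, the hypothesis class $\mathcal{H}$ of classifiers $h_\rho$ induced by arbitrary distance functions over $V$ shatters $S$, giving $\mathrm{VCdim}(\mathcal{H}) = \Omega(N^2)$ (in particular $\ge 3$ for all non-trivial $N$). Plugging this VC-dimension estimate into Corollary 3.7 yields the $\Omega(N^2/\Delta)$ contribution while keeping the noise-dependent first term intact.

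Finally, since $\max(a,b) \ge (a+b)/2$, combining the two lower bounds gives the claimed $n(\epsilon,\delta,\Delta) = \Omega\!\bigl(\eta/\Delta^2 + N^2/\Delta\bigr)$. The only substantive step that needs a sentence of justification is compatibility: the adversary of Theorem 3.6, which picks the target classifier uniformly from the power set of a shattered set, must act inside the distance-induced class $\mathcal{H}$. This is immediate because the proof of Theorem 3.1 demonstrates that every labeling of the shattered $\Omega(N^2)$-sized set is realized by some distance function, so the power-set class on $S$ embeds into $\mathcal{H}$ and the same adversarial strategy is available. I do not expect any real obstacle beyond this bookkeeping; the heavy lifting has already been done in Theorems 3.1, 3.4, and 3.6.
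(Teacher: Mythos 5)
Your proposal is correct and matches the paper's (implicit) derivation: Theorem 3.8 is stated without an explicit proof, immediately after Corollary 3.8, and the intended argument is exactly what you describe --- instantiate the corollary's $\Omega(\eta/\Delta^2 + \mathrm{VCdim}/\Delta)$ bound with the $\Omega(N^2)$ VC-dimension lower bound for arbitrary-distance classifiers established in Theorem 3.1. Your added remark that the power-set adversary of the VC-based lower bound is realizable inside the distance-induced class (because Theorem 3.1 shows every labeling of the shattered triple set is achieved by some distance function) is the one piece of bookkeeping the paper leaves unstated, and you handle it correctly.
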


\begin{theorem}
    For the ${\ell}_p$ distance ${\rho}_p$, a dataset of size $N$, and any noise rate $\eta >0$, the sample complexity of contrastive learning with accuracy $\epsilon =2\eta +\Delta$ in the presence of nasty noise with noise rate $\eta$ is $n(\epsilon ,\delta ,\Delta)=\Omega \left (\frac{\eta}{\Delta ^2}+\frac{\text{min}(Nd,N^2)}{\Delta}\right )$.
\end{theorem}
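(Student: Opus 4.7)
The plan is to combine the two previously established lower bounds in the same way that Corollary 3.8 combines Theorem 3.4 and Theorem 3.6, but now specialized to the $\ell_p$-distance hypothesis class on a dataset of size $N$ in dimension $d$. The two bounds to be combined are the noise-based bound $\Omega(\eta/\Delta^2)$ from Theorem 3.4, which applies to any non-trivial hypothesis class, and the VC-based bound $\Omega(\mathrm{VCdim}(\mathcal{H})/\Delta)$ from Theorem 3.6. Since both bounds must hold simultaneously for any learner, the sample complexity is at least the sum of the two (up to a constant factor).

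First, I would invoke Theorem 3.4 directly: the $\ell_p$-hypothesis class is non-trivial, so any learning algorithm achieving accuracy $\epsilon = 2\eta + \Delta$ under nasty noise of rate $\eta$ requires at least $\Omega(\eta/\Delta^2)$ samples, regardless of the specific distance structure. Second, I would recall from Theorem 3.2 that the $\ell_p$-hypothesis class shatters some sample set $S$ of size $\Omega(\min(Nd, N^2))$, via the explicit representation function construction with anchors $x_i$ and auxiliary points $y_j$. Since $S$ is shattered, the restriction of $\mathcal{H}$ to $S$ contains all $2^{|S|}$ labelings, which is precisely the assumption (``$\mathcal{H}$ is the power set of $S$'') used in the proof of Theorem 3.6. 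Plugging $\mathrm{VCdim}(\mathcal{H}) = \Omega(\min(Nd, N^2))$ into the bound from Theorem 3.6 yields $\Omega(\min(Nd, N^2)/\Delta)$.

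Adding the two bounds gives $\Omega\bigl(\eta/\Delta^2 + \min(Nd, N^2)/\Delta\bigr)$, matching the claim. The argument mirrors the proof of Theorem 3.9, with the only substantive change being the substitution of the $\ell_p$-specific VC-dimension bound for the generic $\Omega(N^2)$ bound.

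The main technical point to verify, rather than a real obstacle, is that the adversary construction in Theorem 3.6 remains valid on top of the $\ell_p$ shattered set: the adversary chooses a target $h^*$ uniformly at random from the $2^{|S|}$ labelings of $S$, and by shattering every such labeling is realized by some $\ell_p$-hypothesis. The noise construction in Theorem 3.4 is already distribution-agnostic and requires only that two hypotheses differ on a sample of probability $2\eta$, which again follows because the shattered set contains triples on which distinct hypotheses disagree. Hence both proofs transplant verbatim, and no distance-specific modification beyond invoking Theorem 3.2 is needed.
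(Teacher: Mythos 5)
The paper gives no explicit proof for this theorem, but its placement immediately after Corollary 3.8 makes clear that the intended argument is exactly yours: combine the $\Omega(\eta/\Delta^2)$ bound of Theorem 3.4 with the $\Omega(\mathrm{VCdim}/\Delta)$ bound of Theorem 3.6, instantiated with the shattered set of size $\Omega(\min(Nd,N^2))$ constructed in Theorem 3.2. Your proposal is correct and matches this approach; the only caveat (inherited from the paper itself) is that the parameter restrictions on $\delta$ and $\Delta$ from Theorems 3.4 and 3.6 should strictly carry over into the statement.
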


\section{Upper Bound}
\subsection{Upper bound in classical PAC case}
\begin{theorem}[Upper bound for arbitrary distances]
    For an arbitrary distance function $\rho$ and a dataset of size $N$, the sample complexity of contrastive learning is $n(\epsilon,\delta)=O \left (\frac{N^2}{\epsilon}\text{polylog}\left (\frac{1}{\epsilon},\frac{1}{\delta}\right )\right )$ in the classical PAC case.
\end{theorem}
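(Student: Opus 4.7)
The plan is to show a matching $O(N^2)$ upper bound on $VCdim(\mathcal{H})$ for the class $\mathcal{H}=\{h_\rho : \rho\text{ an arbitrary distance function on }V\}$ and then invoke the PAC sample-complexity bound in Lemma 2.11. The starting observation is structural: the instance space $V^3$ partitions by anchor as $V^3=\bigsqcup_{i=1}^{N} X_i$ with $X_i=\{v_i\}\times V\times V$, and the restriction $h_\rho|_{X_i}$ depends only on the $N-1$ values $\{\rho(v_i,v_j)\}_{j\neq i}$, in fact only on the permutation of $V\setminus\{v_i\}$ those values induce. Since $\rho$ is unconstrained, this per-anchor component can be any comparison function coming from a permutation of $N-1$ points, and the $N$ components are chosen independently, so $\mathcal{H}$ is a product class $\prod_{i=1}^{N}\mathcal{H}_i$ over the partitioned domain.

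Next I would use the standard fact that VC-dimension is additive on product classes over disjoint domains: a set $S\subseteq V^3$ is shattered by $\mathcal{H}$ iff each slice $S\cap X_i$ is shattered by $\mathcal{H}_i$, giving $VCdim(\mathcal{H})=\sum_{i=1}^{N}VCdim(\mathcal{H}_i)$. The key per-anchor estimate is $VCdim(\mathcal{H}_i)\leq N-2$. To see this, take any pair-set $P$ on $V\setminus\{v_i\}$ that is shattered by $\mathcal{H}_i$. Shattering requires every orientation of $P$ to be consistent with some linear order, but by transitivity any directed cycle is impossible, forcing $P$ to be a forest on $N-1$ vertices, hence $|P|\leq N-2$. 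Conversely, any spanning tree on $N-1$ vertices is shattered, since every orientation of a forest extends to a total order, confirming tightness. Summing over the $N$ anchors yields $VCdim(\mathcal{H})\leq N(N-2)=O(N^2)$.

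Finally I would plug this bound into Lemma 2.11 in the binary-label regime $|S|=2$, where Natarajan dimension coincides with VC-dimension, obtaining
\begin{equation*}
    n(\epsilon,\delta)=O\left(\frac{VCdim(\mathcal{H})}{\epsilon}\text{polylog}\left(\frac{1}{\epsilon},\frac{1}{\delta}\right)\right)=O\left(\frac{N^2}{\epsilon}\text{polylog}\left(\frac{1}{\epsilon},\frac{1}{\delta}\right)\right),
\end{equation*}
which matches the $\Omega(N^2/\epsilon\cdot\text{polylog})$ lower bound of Theorem 3.1. The main obstacle will be the per-anchor step: the naive counting bound $VCdim(\mathcal{H}_i)\leq\log_2((N-1)!)=O(N\log N)$ would only give a total of $O(N^2\log N)$, introducing a spurious $\log N$ factor outside the intended $\text{polylog}(1/\epsilon,1/\delta)$; the forest/transitivity argument above is precisely what tightens this to $O(N)$ per anchor and makes the final bound match the $\Omega(N^2)$ shattering construction of Theorem 3.1.
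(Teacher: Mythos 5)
Your proposal is correct and rests on the same core idea as the paper's proof: per anchor, $h_\rho$ reduces to comparisons against a linear order of $\{\rho(v_i,\cdot)\}$, and a shattered set of pairs cannot contain a cycle since the cyclic orientation is unrealizable, which caps the count at $O(N)$ edges per anchor and $O(N^2)$ overall before invoking Lemma 2.11. The only difference is bookkeeping --- you sum per-anchor VC-dimensions via additivity over the disjoint slices $X_i$ (giving the slightly sharper $N(N-2)$ and the matching forest lower bound), whereas the paper applies pigeonhole to find a single anchor with $N$ samples and exhibits one unsatisfiable cyclic labeling --- so this is a refinement of the same route rather than a genuinely different one.
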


\begin{proof}
Consider any set of samples $\{(x_i,y_i,z_i)\}_{i=1,...,k}$ of size $k\geq N^2$. There exists a data point x such that there are at least $N$ samples which have x as their anchor element. We denote them as $(x,y_{1},z_{1}),...,(x,y_{N},z_{N})$. Consider a graph that has a vertex corresponding to each element in the dataset. Create an undirected edge in this graph between each pairs of vertices $(y_{1},z_{1}),...,(y_{n},z_{n})$.
Since the number of edges is equal to the number of vertices, there must exist a cycle $\mathcal{C}$ in this graph. We can index the vertices along this cycle as $v_1,...,v_t$. Now consider the labeling of the samples $\rho (x,v_1)<\rho (x,v_2)<\rho (x,v_3)<...<\rho (x,v_t)<\rho (x,v_1)$. No distance function can satisfy this labeling and hence not all different labelings of this sample set are possible. Any sample set with size larger than $N^2$ will not be shattered. Thus, the upper bound of VC-dimension is $N^2$. By applying Lemma 2.11, we have the upper bound of sample complexity $n(\epsilon,\delta)=O \left (\frac{N^2}{\epsilon}\text{polylog}\left (\frac{1}{\epsilon},\frac{1}{\delta}\right )\right )$.
\end{proof}

\begin{theorem}[Upper bound for $\ell_p$-distances]
    For integer $p$, a dataset $V$ of size $N$, and the ${\ell}_p$ distance ${\rho}_p:\mathrm{V}\times \mathrm{V} \rightarrow \mathbb{R}$ in a $d$-dimensional space, the sample complexity of contrastive learning in classical PAC case is upper bounded as following: for even $p$, $n(\epsilon,\delta)=O\left (\frac{\text{min}(Nd,N^2)}{\epsilon}\text{polylog}\left (\frac{1}{\epsilon},\frac{1}{\delta}\right )\right )$; for odd $p$, $n(\epsilon,\delta)=O\left (\frac{\text{min}(Nd\text{log}N,N^2)}{\epsilon}\text{polylog}\left (\frac{1}{\epsilon},\frac{1}{\delta}\right )\right )$; for constant $d$, $n(\epsilon,\delta)=O\left (\frac{N}{\epsilon}\text{polylog}\left (\frac{1}{\epsilon},\frac{1}{\delta}\right )\right )$.
\end{theorem}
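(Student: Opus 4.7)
The plan is to bound the VC-dimension of the hypothesis class $\calH_p$ of $\ell_p$-based contrastive classifiers with embedding $f:V\to\R^d$ and then plug the bound into Lemma 2.11. The $\min(Nd,N^2)$ structure will come from combining two independent estimates: the $N^2$ upper bound already established in Theorem 4.1 for arbitrary distances (which subsumes $\ell_p$), and a new $Nd$-type bound obtained by parameterizing the class by the embedding coordinates.

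First I would view $h_p\in\calH_p$ as determined by the $Nd$ real entries of the embedding matrix $F\in\R^{N\times d}$ whose rows are $f(v_1),\ldots,f(v_N)$. For a query $(x,y,z)$ the label computed is $\sign\!\left(\sum_{i=1}^{d}|F_{xi}-F_{yi}|^p-\sum_{i=1}^{d}|F_{xi}-F_{zi}|^p\right)$. For even $p$ the absolute values drop and this is a polynomial of constant degree $p$ in the $Nd$ parameters; applying the standard Warren / Goldberg--Jerrum bound on the VC-dimension of concept classes defined by the sign of fixed-degree polynomials in $n$ real parameters gives $VCdim(\calH_p)=O(Nd)$ (the constant swallowing $p$). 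For odd $p$ the expression is only piecewise polynomial: the sign of each coordinate difference $F_{ui}-F_{vi}$ for $u,v\in V$, $i\in[d]$ determines which polynomial piece applies. These $O(N^2 d)$ hyperplanes partition $\R^{Nd}$ into at most $N^{O(d)}$ cells, and on each cell the classifier is the sign of a degree-$p$ polynomial. Taking a union of polynomial-sign VC bounds across cells via Sauer's lemma contributes a single extra $\log N$ factor and yields $VCdim(\calH_p)=O(Nd\log N)$.

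Combining these estimates with the $N^2$ bound from Theorem 4.1 gives $VCdim(\calH_p)=O(\min(Nd,N^2))$ for even $p$ and $O(\min(Nd\log N,N^2))$ for odd $p$, and Lemma 2.11 converts each into the claimed sample-complexity statement. For the constant-$d$ case I would revisit the arrangement analysis: with $d=O(1)$ the $O(N^2)$ coordinate-difference hyperplanes live in $\R^{Nd}=\R^{O(N)}$ and cut the parameter space into at most $\operatorname{poly}(N)$ cells, so the total number of distinct labelings the class can realize on any sample of size $n$ is $\operatorname{poly}(n,N)$. A direct Sauer-style dichotomy then forces $VCdim(\calH_p)=O(N)$, absorbing the spurious $\log N$ into the constant depending on $d,p$, and Lemma 2.11 finishes.

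The main obstacle I anticipate is the odd-$p$ step: making precise that the arrangement of the $O(N^2 d)$ sign hyperplanes in $\R^{Nd}$ has the claimed cell count and that the polynomial-sign VC estimate extends uniformly across these pieces without compounding extra logarithmic factors beyond the single $\log N$ claimed. The even-$p$ bound is essentially immediate once the polynomial parameterization is in place, and the constant-$d$ refinement is a routine consequence of low-dimensional arrangement complexity.
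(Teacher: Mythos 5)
Your even-$p$ and odd-$p$ arguments follow essentially the same route as the paper: parameterize the class by the $Nd$ embedding coordinates, apply a Warren-type bound on the number of sign patterns of degree-$p$ polynomials (the paper's Claim 4.3), and for odd $p$ first condition on the signs of the coordinate differences before counting polynomial sign patterns per piece. One arithmetic slip in the odd case: the arrangement of the $O(N^2d)$ coordinate-difference hyperplanes in $\R^{Nd}$ has $N^{O(Nd)}$ cells (equivalently, there are at most $(N!)^d<N^{Nd}$ coordinate orderings, which is how the paper counts them), not $N^{O(d)}$. Your stated conclusion $O(Nd\log N)$ is the one that follows from the \emph{corrected} count, since the extra additive term in the VC bound is $\log$ of the cell count, i.e.\ $O(Nd\log N)$; with your stated count $N^{O(d)}$ you would instead get $O(Nd)$, which is not what you claim and not what the method delivers.

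The constant-$d$ case contains a genuine gap. You assert that the $O(N^2)$ coordinate-difference hyperplanes cut $\R^{Nd}=\R^{O(N)}$ into $\operatorname{poly}(N)$ cells; this is false. An arrangement of $m$ hyperplanes in $\R^{\ell}$ has up to $\binom{m}{\le \ell}=(em/\ell)^{O(\ell)}$ cells, which here is $N^{\Theta(N)}$ even for $d=O(1)$ (concretely, the cells are indexed by the $(N!)^d$ possible coordinate orderings). Consequently your route yields only $VCdim=O(N\log N)$ for constant $d$ -- the same bound as the general odd-$p$ case -- and does not remove the $\log N$. The paper closes this gap differently: for each sample it introduces all $2^{2d}$ sign assignments $\tau\in\{-1,1\}^{2d}$ for the absolute values, producing $2^{2d}n$ genuine polynomials, and feeds all of them \emph{jointly} into the Warren bound. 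The number of connected components is then at most $\bigl(\tfrac{4ep\,2^{2d}n}{Nd}\bigr)^{Nd}$, and on each component the true label of every sample is determined (since the signs of all $2^{2d}$ variants, in particular the correct one, are fixed). Comparing with $2^n$ forces $n=O(Nd^2)$, which is $O(N)$ for constant $d$. The key point your proposal misses is that the blow-up must be paid per sample as a factor $2^{2d}$ inside the polynomial count (costing $O(Nd\cdot d)$ in the exponent) rather than globally as a cell count over all of $V\times V$ (costing $O(Nd\log N)$).
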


\begin{proof}
In this proof, we will show the VC-dimension of contrastive learning for $\ell_p$-distances using a dataset $V$ of size $N$. We denote the dimension of representation space as $d$. For the first two cases, we assume $d<n$; in the third case, we consider constant $d$. The upper bound of VC-dimension is $O(N\text{min}(d,N))$ for even $p\geq 2$ and $O(nd\text{log}n)$ for odd $p\geq 1$. To prove this, it is suffices to show that for every set of samples $S={(x_i,y_i,z_i)}_{i=1}^n$ of size $n=\tilde{\Omega}(N\text{min}(d,N))$ there exists labeling of $S$ that cannot be satisfied by any embedding in a d-dimensional ${\ell}_p$-space. By applying Lemma 2.11, we can have upper bounds for sample complexity $n(\epsilon ,\delta )$.

\vspace{5pt}
Our proof uses the following result in algebraic geometry:
\begin{claim}
    Let $m\geq \ell \geq 2$ be integers, and let $P_1,...,P_m$ be real polynomials on $\ell$ variables, each of degree $\leq k$. Let
    \begin{equation*}
        U(P_1,...,P_m)=\{\Vec{x}\in \mathbb{R}^\ell |P_i(\Vec{x})\neq 0\text{ for all }i\in [m]\}
    \end{equation*}
    be the set of points $\Vec{x}\in \mathbb{R}^\ell$ which are non-zero in all polynomials. Then the number of connected components in $U(P_1,...,P_m)$ is at most $(\frac{4ekm}{\ell})^\ell$.
\end{claim}

\paragraph{Upper Bound for Even p}
We denote the embedding of each data point $v\in V$ in $d$-dimensional space as $f(v)=(f_1(v),f_2(v),...,f_d(v))$ where $f_i$ is the $i$-th coordinate of the representation function $f: V\rightarrow \mathbb{R}^d$. Let $\Vec{V}=(f_1(v_1),...,f_d(v_1),...,f_1(v_N),...,f_d(v_N))$. For every sample $(x,y,z)$ where $x,y,z\in V$, we define the polynomial $P_{x,y,z}:\mathbb{R}^{Nd}\rightarrow \mathbb{R}$ as 
\begin{equation*}
    P_{x,y,z}(\Vec{V})=\sum\limits_{j=1}^d(f_j(x)-f_j(y))^p-\sum\limits_{j=1}^d(f_j(x)-f_j(z))^p.
\end{equation*}
The labeling of $(x,y,z)$ is $(x,y^+,z^-)$ if and only if $P_{x,y,z}(\Vec{V})<0$, otherwise $(x,z^+,y^-)$ if and only if $P_{x,y,z}(\Vec{V})>0$. Furthermore, we define $P_i(\Vec{V})=P_{x_i,y_i,z_i}(\Vec{V})$ for $i\in [n]$.

\vspace{5pt}
The polynomial $P_{x,y,z}$ is corresponding to the classifier $h_\rho$ we defined. Now we denote the labeling of sample set $S$ as $h$: for any $i\in [n]$, $(x_i,y_i^+,z_i^-)\in S$, define $h(S)\in \{-1,1\}^n$ where $h_i(S)=1$ if $(x_i,y_i,z_i)$ is labeled as $(x_i,y_i^+,z_i^-)$, otherwise $h_i(S)=-1$ if $(x_i,y_i,z_i)$ is labeled as $(x_i,z_i^+,y_i^-)$. 
\begin{lemma}
    For $h\in \{-1,1\}^n$, define $C_h=\{\Vec{x}\in \mathbb{R}^d|\mathbf{sign}P_i(\Vec{x})=h_i\text{ for all }i\in [n]\}$. Then
    \begin{enumerate}
        \item For distinct $h,h'\in \{-1,1\}^n$ we have $C_h\cap C_{h'}=\emptyset$.
        \item Each $C_h$ is either empty or is a union of  connected components of $U(P_1,...,P_m)$.
        \item Let $S$ be a sample set labeled by $h^*$. Then $C_{h^*}\neq \emptyset$ if and only if there is a mapping $V\rightarrow \mathbb{R}^d$ satisfying all the distance constraints of $h^*(S)$.
    \end{enumerate}
\end{lemma}

\paragraph{Proof of Lemma 4.4}
\begin{proof}
    
\end{proof}

Now we apply above setting and observations to the case of even $p$. By Claim 4.3, there are at most $(\frac{4epn}{Nd})^{Nd}$ connected components in the set $U(P_1,...,P_n)$. If there are two labeling of $S$ are $h^*_1,h^*_2$ such that $h^*_1(S)\neq h^*_2(S)$, then by Lemma 4.4 either the set $C_{h^*_1}$ and $C_{h^*_2}$ are different connected components, or at least one of them is empty. If the size of $S$ is $n$, the number of possible labeling of $S$ is $2^n$, and $2^n>(\frac{4epn}{Nd})^{Nd}$ when $n\geq cNd$ for a sufficiently large constant $c$. Therefore, for at least one labeling $h$ of $S$, it holds that  $C_h=\emptyset$. There is no embedding satisfying the labeling of samples $h$, the upper bound $O(Nd)$ follows.

\paragraph{Upper Bound for Odd p}
Unlike in the case of even $p$, for odd $p$, the distance constraints are comprised of the form $|f_j(x)-f_j(y)|^p$, thus are not polynomial constrains. 

Similar to the case of even $p$, denote each data point $v\in V$ in $d$-dimensional space as $f(v)=(f_1(v),f_2(v),...,f_d(v))$. Let $\Vec{V}=(f_1(v_1),...,f_d(v_1),...,f_1(v_N),...,f_d(v_N))$. For every $j\in[d]$, we fix the ordering of the points by defining a permutation ${\pi}^{(j)}:[n]\rightarrow [n]$ such that $f_j(v_{{\pi}^{(j)}\cdot 1})\leq f_j(v_{{\pi}^{(j)}\cdot 2})\leq ...\leq f_j(v_{{\pi}^{(j)}\cdot N})$, and bound the number of satisfiable labeling w.r.t this ordering. We define ${\sigma}_{x,y}^{(j)}=1$ if $f_j(x)\geq f_j(y)$, and ${\sigma}_{x,y}^{(j)}=-1$ otherwise. Then use $\sigma$ as the order, for any sample $(x,y,z)$, define the polynomial $P_{x,y,z}:\mathbb{R}^{Nd}\rightarrow \mathbb{R}$ as 
\begin{equation*}
    P_{x,y,z}(\Vec{V})=\sum\limits_{j=1}^d{\sigma}_{x,y}^{(j)}(f_j(x)-f_j(y))^p-\sum\limits_{j=1}^d{\sigma}_{x,z}^{(j)}(f_j(x)-f_j(z))^p.
\end{equation*}
Note that $P_{x,y,z}(\Vec{V})<0$ if and only if $(x,y,z)$ is labeled as $(x,y^+,z^-)$ satisfying the selected order ${\pi}^{(1)},...,{\pi}^{(d)}$, otherwise $P_{x,y,z}(\Vec{V})>0$. Furthermore, for any $i\in [n]$, $(x_i,y_i^+,z_i^-)\in S$, we define $P_i(\Vec{V})=P_{x_i,y_i,z_i}(\Vec{V})$; and $h(S)\in \{-1,1\}^n$ where $h_i(S)=1$ if $(x_i,y_i,z_i)$ is labeled as $(x_i,y_i^+,z_i^-)$, otherwise $h_i(S)=-1$ if $(x_i,y_i,z_i)$ is labeled as $(x_i,z_i^+,y_i^-)$. 

\vspace{5pt}
Since in the case of even $p$, there are at most $(\frac{4epn}{Nd})^{Nd}$ connected components in the set $U(P_1,...,P_n)$. Therefore, there are at most $(\frac{4epn}{Nd})^{Nd}$ possible labeling w.r.t the ordering ${\pi}^{(1)},...,{\pi}^{(d)}$. Let $n=\Omega (Nd\text{log}N)$, then we have that $\frac{2^n}{N^{Nd}}>(\frac{4epn}{Nd})^Nd$/ Since there are $(N!)^d<N^{Nd}$ labeling choices of the ordering  ${\pi}^{(1)},...,{\pi}^{(d)}$, there are at most $(\frac{4epn}{Nd})^{Nd}\cdot {N^{Nd}}$ possible labeling for which there exists some order such that the labeling w.r.t an order can satisfy the distance constraints according to an embedding in a d-dimensional ${\ell}_p$-space. Since $(\frac{4epn}{Nd})^{Nd}\cdot {N^{Nd}}<2^n$, there exists a choice of labeling are not satisfiable for any order, which indicates that S is not shattered.

\paragraph{Upper Bound for Constant d}
For constant odd $p>0$, a dataset $V$ of size $N$, and the ${\ell}_p$-norm in a $d$-dimensional space, the VC-dimension of contrastive learning is $O(nd^2)$. This gives optimal bound for constant $d$.

First, assume that $d^2<N$. Then, similar to previous cases, denote each data point $v\in V$ in $d$-dimensional space as $f(v)=(f_1(v),f_2(v),...,f_d(v))$. Let $\Vec{V}=(f_1(v_1),...,f_d(v_1),...,f_1(v_N),...,f_d(v_N))$. Since for odd $p$, ${\ell}_p$-distance function are comprised of the form $|f_j(x)-f_j(y)|^p$, to represent the effects introduced by the absolute value, we use $\tau \in \{-1,1\}^{2d}$ to denote the sign of absolute value. Now we define the polynomial $P_{\tau ,x,y,z}$ as
\begin{equation*}
    P_{\tau ,x,y,z}(\Vec{V})=\sum\limits_{j=1}^d\tau (j)(f_j(x)-f_j(y))^p-\sum\limits_{j=1}^d\tau (d+j)(f_j(x)-f_j(z))^p.
\end{equation*}
where $\tau (j)$ is the value of the $j$'s coordinate of $\tau$. 

There are $2^{2d}$ polynomials associated with each sample $(x,y,z)$. By claim 4.3, there are there are at most $(\frac{4ep2^{2d}n}{Nd})^{Nd}$ connected components in the set $U(P_1,...,P_2^{2d}n)$. Therefore, if take $m\geq cnd^2$ for sufficiently large $c$, then we have $2^n>(\frac{4ep2^{2d}n}{Nd})^{Nd}$, which means that there is a labeling with a choice of signs can not be satisfied, i.e. the set of samples cannot be shattered.
\end{proof}

\subsection{Upper bound in nasty noisy case}
\begin{theorem}
    For any non-trivial hypothesis class $\mathcal{H}$, any $\eta <\frac{1}{2}$, $\delta >0$, $\Delta >0$, the sample size required to PAC learn a classifier in contrastive learning with accuracy $\epsilon=2\eta +\Delta$ and confidence $\delta$ in the presence of nasty noise with noise rate $\eta$ is $O\left (\frac{1}{{\Delta}^2}(VCdim+\text{log}\frac{1}{\delta})\right )$.
\end{theorem}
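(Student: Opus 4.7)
The plan is to run empirical risk minimization on the corrupted sample $S$ and to bound the gap between empirical and true error using the $\alpha$-sample theorem together with a Chernoff bound on the number of corruptions. I would set $\alpha = \Delta/3$ and take $n = O\bigl(\tfrac{1}{\alpha^2}(VCdim(\mathcal{H}) + \log(1/\delta))\bigr)$, which already matches the target sample complexity up to constants.

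The heart of the argument is to compare, for every $h \in \mathcal{H}$, three error quantities: the true error $\mathrm{err}_\mathcal{D}(h) = \Pr_\mathcal{D}[h(x,y,z) \neq h^*(x,y,z)]$, the empirical error $\widehat{\mathrm{err}}_{S^*}(h)$ on the clean pre-modification sample $S^*$ drawn i.i.d. from $\mathcal{D}$, and the empirical error $\widehat{\mathrm{err}}_S(h)$ on the sample $S$ actually handed to the algorithm. Theorem 2.9 applied to the disagreement class $\{h \triangle h^* : h \in \mathcal{H}\}$ (whose VC-dimension is $O(VCdim(\mathcal{H}))$ by Lemma 2.7) gives, with probability at least $1 - \delta/2$, that $S^*$ is an $\alpha$-sample, so $|\widehat{\mathrm{err}}_{S^*}(h) - \mathrm{err}_\mathcal{D}(h)| \leq \alpha$ for all $h$. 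Because $S$ and $S^*$ differ on exactly $m$ coordinates we get for free that $|\widehat{\mathrm{err}}_S(h) - \widehat{\mathrm{err}}_{S^*}(h)| \leq m/n$, and a Chernoff bound on $m \sim \mathbf{Bin}(n, \eta)$ forces $m/n \leq \eta + \alpha$ with probability at least $1 - \delta/2$.

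Let $\hat{h}$ be the output of ERM on $S$. Since $h^*$ matches every uncorrupted label, $\widehat{\mathrm{err}}_S(h^*) \leq m/n$, and therefore $\widehat{\mathrm{err}}_S(\hat{h}) \leq m/n$ by definition of ERM. Chaining the three inequalities above on $\hat{h}$,
\begin{equation*}
    \mathrm{err}_\mathcal{D}(\hat{h}) \leq \widehat{\mathrm{err}}_{S^*}(\hat{h}) + \alpha \leq \widehat{\mathrm{err}}_S(\hat{h}) + m/n + \alpha \leq 2m/n + \alpha \leq 2\eta + 3\alpha = 2\eta + \Delta = \epsilon,
\end{equation*}
and a union bound over the two high-probability events yields overall confidence $1 - \delta$.

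The main obstacle is not any one step in isolation but the need to couple two concentration statements through a single sample size. Concretely, I would have to verify that the disagreement class really inherits an $O(VCdim(\mathcal{H}))$ bound from Lemma 2.7 (viewing $h \triangle h^*$ as a union of intersections with $h^*$ and its complement), and then tune the absolute constants so that one choice $n = O\bigl(\tfrac{1}{\Delta^2}(VCdim(\mathcal{H}) + \log(1/\delta))\bigr)$ simultaneously delivers the $\alpha$-sample property on $S^*$ and the Chernoff bound on $m/n$. Once both concentrations hold, the rest is a purely mechanical chain of triangle-style inequalities.
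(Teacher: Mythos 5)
Your proposal is correct and follows essentially the same route as the paper: a Hoeffding/Chernoff bound forcing $m/n \leq \eta + O(\Delta)$, the observation that an ERM hypothesis errs on at most $2m$ points of the clean sample $S^*$ (its errors on $S$ plus all corrupted points), and Theorem 2.9 applied to the symmetric-difference class to convert that empirical count into true error, with a union bound over the two events. Your bookkeeping with $\alpha = \Delta/3$ is in fact slightly cleaner than the paper's split into $\Delta/4$ and $\Delta/2$, whose final line omits the $\alpha$-sample slack.
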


\begin{proof}
First we assume that, given a sample set of size $n=\frac{c}{{\Delta}^2}(d+\text{log}\frac{2}{\delta})$, with high probability, the adversary modifies the samples at most $n(\eta +\frac{\Delta}{4})$. Denote the number of samples modified by the adversary as $m\sim Bin(n,\eta)$ with expectation $E[m]=\eta n$. By Hoeffding's inequality, we have
\begin{equation*}
    \text{Pr}\left [m>\left (\eta +\frac{\Delta}{4}\right )n\right]\leq \text{exp}\left (-\frac{2n{\Delta}^2}{16}\right )
\end{equation*}
Choose a suitable constant $c$ to make $n\geq \frac{8}{{\Delta}^2}\text{log}\frac{2}{\delta}$, then the probability that this event happens is at most $\frac{\delta}{2}$.

\vspace{5pt}
Consider the process: the original sample set $S'$ is be labeled by a target classifier $h^*$ and the adversary modifies at most $n(\eta +\frac{\Delta}{4})$ samples, then the modified set $S$ is given to the algorithm $A$. With probability at least $1-\frac{\delta}{2}$, Algorithm A will be able to choose a function $h\in \mathcal{H}$ such that $h$ will misclassify at most $n(\eta +\frac{\Delta}{4})$ sample shown to it. However, in the worst case, the mistakes of $h$ occur in points which were not modified. And $h$ may misclassify all samples that the adversary modified. Thus in this case, we are guaranteed that the classifier $h$ errs on no more than $2n(\eta +\frac{\Delta}{4})$ samples on the original sample set $S'$. By Theorem 2.9, with probability at least $1-\frac{\delta}{2}$, there is a constant $c$, the sample set $S'$ of size $\frac{c}{{\Delta}^2}(d+\text{log}\frac{2}{\delta})$ is a $\frac{\Delta}{2}$-sample for the class of symmetric differences between hypothesis $h\in \mathcal{H}$ and the target $h^*$, denoted as $\{h\Delta h^*:h\in \mathcal{H}\}$. Then by the union bound, with probability $1-\delta$, $m\leq n(\eta +\frac{\Delta}{4})$, it follows $|\{(x,y,z))|(x,y,z)\in S', h(x,y,z)\neq h^*(x,y,z)\}|\leq 2n(\eta +\frac{\Delta}{4})$. Therefore, 
\begin{equation*}
    \text{Pr}_\mathcal{D}[h\Delta h^*]\leq 2\eta +\frac{\Delta}{2}<2\eta +\Delta =\epsilon.
\end{equation*}
\end{proof}

\begin{theorem}
    For an arbitrary distance function $\rho$, a dataset of size $N$, and any noise rate $\eta >0$, the sample complexity of contrastive learning with accuracy $\epsilon =2\eta +\Delta$ in the presence of nasty noise with noise rate $\eta$ is $n(\epsilon ,\delta ,\Delta)=O\left (\frac{1}{{\Delta}^2}(N^2+\text{log}\frac{1}{\delta})\right )$.
\end{theorem}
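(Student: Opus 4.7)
The proof should follow by directly combining two results already established in the paper: the generic nasty-noise upper bound of Theorem 4.5, which says that $O\!\left(\tfrac{1}{\Delta^{2}}\bigl(VCdim(\mathcal H)+\log\tfrac{1}{\delta}\bigr)\right)$ samples suffice to PAC-learn any hypothesis class $\mathcal H$ with accuracy $\epsilon=2\eta+\Delta$ under nasty noise of rate $\eta$, and the PAC upper bound for arbitrary distance functions implicit in Theorem 4.1, whose proof shows that the hypothesis class $\mathcal H_\rho=\{h_\rho:\rho\text{ a distance function on }V\}$ of contrastive classifiers over a dataset $V$ of size $N$ has $VCdim(\mathcal H_\rho)=O(N^{2})$.

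My plan is first to identify $\mathcal H$ in Theorem 4.5 with the class $\mathcal H_\rho$ of contrastive hypotheses induced by arbitrary distance functions over $V$, so that the nasty-noise upper bound depends only on $VCdim(\mathcal H_\rho)$, the noise margin $\Delta$, and the confidence $\delta$. I would then invoke the counting argument from the proof of Theorem 4.1, in which any sample set of size exceeding $N^{2}$ contains, for some anchor $x$, at least $N$ triples with that anchor, and the resulting cycle in the induced graph on $\{y_i,z_i\}$ forces an unsatisfiable chain $\rho(x,v_1)<\rho(x,v_2)<\cdots<\rho(x,v_t)<\rho(x,v_1)$; this argument gives $VCdim(\mathcal H_\rho)\le N^{2}$.

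Plugging the bound $VCdim(\mathcal H_\rho)=O(N^{2})$ into the expression of Theorem 4.5 immediately yields
\begin{equation*}
    n(\epsilon,\delta,\Delta)=O\!\left(\frac{1}{\Delta^{2}}\bigl(N^{2}+\log\tfrac{1}{\delta}\bigr)\right),
\end{equation*}
which is exactly the claimed bound, and matches the lower bound of Theorem 3.8 up to the $\eta/\Delta^{2}$ term present in the noise-dominated regime.

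The main (and essentially only) obstacle is verifying that the VC-dimension argument of Theorem 4.1 applies verbatim to the hypothesis class appearing in Theorem 4.5; since Theorem 4.5 is stated for an arbitrary non-trivial hypothesis class and its proof uses only the existence of an $\alpha$-sample for the symmetric-difference class $\{h\triangle h^{*}:h\in\mathcal H\}$, and since this symmetric-difference class has VC-dimension $O(VCdim(\mathcal H))$ by Lemma 2.4, the substitution is justified. Hence the proof reduces to a single-line composition of the two earlier results, with no additional probabilistic or combinatorial work required.
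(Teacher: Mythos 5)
Your proposal is correct and matches the paper's (implicit) argument: the paper states this result immediately after Theorem 4.5 with no separate proof, intending it as a direct corollary obtained by substituting the $VCdim = O(N^2)$ bound from the proof of Theorem 4.1 into the generic nasty-noise upper bound of Theorem 4.5. Your additional remark justifying the substitution via the symmetric-difference class and Lemma 2.4 is a reasonable (and slightly more careful) elaboration of the same route.
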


\begin{theorem}
    For the ${\ell}_p$ distance ${\rho}_p$, a dataset of size $N$, and any noise rate $\eta >0$, the sample complexity of contrastive learning with accuracy $\epsilon =2\eta +\Delta$ in the presence of nasty noise with noise rate $\eta$ is upper bounded as following: for even $p$,  $n(\epsilon ,\delta ,\Delta)=O\left (\frac{1}{{\Delta}^2}(\text{min}(Nd,N^2)+\text{log}\frac{1}{\delta})\right )$; for odd $p$,  $n(\epsilon ,\delta ,\Delta)=O\left (\frac{1}{{\Delta}^2}(\text{min}(Nd\text{log}N,N^2)+\text{log}\frac{1}{\delta})\right )$; for constant $d$, $n(\epsilon ,\delta ,\Delta)=O\left (\frac{1}{{\Delta}^2}(N+\text{log}\frac{1}{\delta})\right )$.
\end{theorem}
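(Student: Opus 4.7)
The plan is to observe that Theorem 4.7 is a direct corollary of the two ingredients already assembled: the general nasty-noise upper bound of Theorem 4.5, which reduces the sample complexity for any hypothesis class with nasty noise to $O\bigl(\frac{1}{\Delta^{2}}(VCdim(\mathcal{H})+\log\frac{1}{\delta})\bigr)$, and the $\ell_p$-specific VC-dimension bounds proved inside Theorem 4.2. Since VC-dimension is a property of the hypothesis class and has no dependence on the noise model, those bounds are exactly the quantity we need to substitute.

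First I would pull out, from the proof of Theorem 4.2, the three VC-dimension statements it actually establishes for the contrastive class $\{h_{\rho_p}\circ f : f:V\to\mathbb{R}^d\}$: namely $O(\min(Nd,N^{2}))$ for even $p$, $O(\min(Nd\log N,N^{2}))$ for odd $p$, and $O(Nd^{2})$ for the constant-$d$ regime, which collapses to $O(N)$ when $d$ is a constant. These are proved by the Warren/connected-components argument using Claim 4.3, combined with the $N^{2}$ bound obtained from the cycle argument in Theorem 4.1. I would then verify that the hypotheses of Theorem 4.5 are satisfied ($\mathcal{H}$ non-trivial, $\eta<\tfrac{1}{2}$, finite VC-dimension), and substitute each of the three VC-dimension bounds into the template $O\bigl(\frac{1}{\Delta^{2}}(VCdim(\mathcal{H})+\log\frac{1}{\delta})\bigr)$ to read off the three sample complexities stated in the theorem.

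There is essentially no obstacle here; the theorem is obtained by composition rather than by new technique. The only thing to be careful about is that Theorem 4.2's proof is phrased as a sample-complexity statement passing through Lemma 2.11, while what we really need to extract is the underlying VC-dimension bound — but that bound is exactly what is proved on the way, so the substitution is immediate. The three cases then give the three bounds in the theorem statement, completing the proof.
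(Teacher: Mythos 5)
Your proposal is correct and matches the paper's intent exactly: the paper states this theorem without proof, treating it as an immediate corollary obtained by substituting the $\ell_p$-specific VC-dimension bounds from the proof of Theorem 4.2 into the general nasty-noise sample-complexity template of Theorem 4.5. Your additional care in extracting the VC-dimension bounds (rather than the sample-complexity phrasing) from Theorem 4.2 is exactly the right reading.
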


\section{Data-dependent Sample Complexity}
\paragraph{k negative}
\begin{enumerate}
    \item $L_{con}(f)=\mathbb{E}[\ell(\{\rho(f(x),f(x_i)\}_{i=1}^{k+1})]$
    \item $\hat{L}_{con}(f)=\frac{1}{n}\sum \limits_{j=1}^n\ell(\{\rho(f(x_j),f(x_{ji})\}_{i=1}^{k+1})$
\end{enumerate}

\begin{theorem}
    Assume $\norm{f(\cdot)}_2\leq R$ for any $f\in \mathcal{F}$. Let S be a sample set in the form of $(x,y^+,z^-)$. Let $\ell :\mathbb{R}^k\rightarrow \mathbb{R}_+$ is $L$-lipschitz w.r.t. the ${\ell}_2$-norm. Then with probability at least $1-\delta$ over the training set $S$, for any $f\in \mathcal{F}$
    \begin{equation*}
        L_{con}(f^*)\leq L_{con}(f)+O\left(\frac{L\mathcal{R}_S(\mathcal{F'})}{n}+\sqrt{\frac{\text{log}\frac{1}{\delta}}{n}}\right)
    \end{equation*}
    where
    \begin{equation*}
        \mathcal{R}_S(\mathcal{F'})=\mathop{\mathbb{E}}\limits_{{\sigma \sim \{\pm 1\}}^{2n(k+1)d}}\left [\mathop{sup}\limits_{f\in \mathcal{F}}\sum\limits_{j\in [n]}\sum\limits_{i\in [k+1]} \sum\limits_{t\in [d]} \left (\sigma_{jit1}f(x_j)+\sigma_{jit2}f(x_{ji})\right )\right]
    \end{equation*}
\end{theorem}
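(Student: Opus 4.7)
The plan is to prove this as a standard uniform-convergence bound for the empirical contrastive risk, using McDiarmid's inequality together with symmetrization and Talagrand's contraction lemma to exploit the $L$-Lipschitz loss. Throughout I will take $f^*$ to be the empirical risk minimizer $f^* \in \arg\min_{f\in\mathcal{F}} \hat{L}_{con}(f)$, so that the target excess-risk inequality reduces to a two-sided uniform bound on $|L_{con}(f)-\hat{L}_{con}(f)|$ via the usual telescoping
\begin{equation*}
L_{con}(f^*)-L_{con}(f)\ \leq\ (L_{con}(f^*)-\hat{L}_{con}(f^*))+(\hat{L}_{con}(f)-L_{con}(f))\ \leq\ 2\sup_{g\in\mathcal{F}}|L_{con}(g)-\hat{L}_{con}(g)|.
\end{equation*}

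First I will apply McDiarmid's inequality to $\Phi(S):=\sup_{f\in\mathcal{F}}\bigl(L_{con}(f)-\hat{L}_{con}(f)\bigr)$. Because $\|f(\cdot)\|_2\leq R$, every pairwise distance $\rho(f(x),f(x_i))$ lies in $[0,2R]$, and since $\ell$ is $L$-Lipschitz in the $\ell_2$ norm, replacing a single triple in $S$ alters $\hat{L}_{con}(f)$ by at most $O(LR/n)$. Hence $\Phi$ satisfies bounded differences of order $LR/n$, and with probability at least $1-\delta$,
\begin{equation*}
\Phi(S)\ \leq\ \mathbb{E}_S[\Phi(S)]\ +\ O\!\left(LR\sqrt{\tfrac{\log(1/\delta)}{n}}\right).
\end{equation*}
A standard symmetrization argument then gives $\mathbb{E}_S[\Phi(S)]\leq 2\,\mathbb{E}_{S,\sigma}\bigl[\sup_{f}\tfrac{1}{n}\sum_j\sigma_j\,\ell(\{\rho(f(x_j),f(x_{ji}))\}_{i})\bigr]$, i.e.\ $2/n$ times the Rademacher complexity of the composed loss class.

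Next I will peel off $\ell$ using the vector-valued form of Talagrand's contraction lemma: because $\ell$ is $L$-Lipschitz with respect to the Euclidean norm on its $(k+1)$-dimensional input, the Rademacher complexity of $\ell\circ(\text{distances})$ is bounded, up to a constant, by $L$ times the Rademacher complexity of the vector-valued distance map. Expanding $\rho(f(x_j),f(x_{ji}))$ coordinate-wise (since $\rho$ is based on the Euclidean norm of $f(x_j)-f(x_{ji})\in\mathbb{R}^d$) and absorbing the sign pattern inside the absolute value into independent Rademacher variables, one introduces separate $\sigma_{jit1},\sigma_{jit2}$ for the two endpoints $f(x_j),f(x_{ji})$ across each of the $d$ coordinates and each of the $k+1$ views, giving the claimed $2n(k+1)d$ Rademacher indices and yielding exactly $\mathcal{R}_S(\mathcal{F}')/n$. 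Combining the McDiarmid bound, symmetrization, and contraction finishes the proof.

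The hard part will be the vector-valued contraction step: the naive, scalar Talagrand inequality would cost a $\sqrt{k+1}$ factor, so one must use the Maurer-type vector contraction inequality to keep the $L$ factor dimension-free, and one must carefully verify that the sign-absorption argument on $|f_t(x)-f_t(y)|$ produces two independent Rademacher variables per coordinate rather than a single shared one. Once this contraction is handled, the rest of the argument — McDiarmid, symmetrization, and the telescoping to the ERM $f^*$ — combines routinely to yield the stated bound, with the $\sqrt{\log(1/\delta)/n}$ term absorbing the $LR$ constants into the $O(\cdot)$.
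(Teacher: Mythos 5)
Your overall architecture matches the paper's: bound the Rademacher complexity of the composed loss class by peeling off the $L$-Lipschitz loss $\ell$ with the Maurer vector contraction lemma, then peel off the distance map to land on $\mathcal{R}_S(\mathcal{F}')$ with $2n(k+1)d$ Rademacher variables, and finish with a Rademacher-complexity generalization bound. The differences are mostly in packaging: the paper cites the concentration step as a black-box lemma (its Lemma 5.3, which internally is exactly your McDiarmid-plus-symmetrization argument), whereas you re-derive it; and you add the explicit telescoping through the empirical risk minimizer to convert the uniform-convergence bound into the stated excess-risk inequality, a step the paper actually glosses over (its proof ends with $L_{con}(f^*)\leq \hat{L}_{con}(f)+\cdots$ rather than the claimed $L_{con}(f^*)\leq L_{con}(f)+\cdots$). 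On that point your write-up is more complete than the paper's.

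One step in your plan is described with the wrong mechanism, however. For the second contraction you propose to expand $\rho(f(x_j),f(x_{ji}))$ ``coordinate-wise'' and absorb the signs of $|f_t(x)-f_t(y)|$ into fresh Rademacher variables. That sign-absorption picture is an $\ell_1$-norm argument; the distance here is the Euclidean norm $\|f(x_j)-f(x_{ji})\|_2=\bigl(\sum_t (f_t(x_j)-f_t(x_{ji}))^2\bigr)^{1/2}$, which does not decompose into coordinate-wise absolute values, so there are no signs to absorb. The correct route --- and the one the paper takes --- is a second application of the same vector contraction lemma, this time to the map $(u,v)\mapsto\|u-v\|_2$ viewed as a $1$-Lipschitz function of the concatenated vector $(u,v)\in\mathbb{R}^{2d}$; it is this application that produces the two independent Rademacher variables $\sigma_{jit1},\sigma_{jit2}$ per coordinate and the extra $\sqrt{2}$ factor. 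The fix is local (you already have the right lemma in hand), but as written the step you yourself flag as ``the hard part'' would not go through.
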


\begin{lemma}[Vector contraction lemma]
    Let $\mathcal{X}$ be any set, $(x_1,...,x_n)\in \mathcal{X}^n$, let $\mathcal{F}$ be a class of functions $f:\mathcal{X}\rightarrow {\ell}_2$ and let $h_i:{\ell}_2\rightarrow \mathbb{R}$ have Lipschitz norm $L$. Then
    \begin{equation*}
        \mathbb{E}\mathop{sup}\limits_{f\in \mathcal{F}}\sum \limits_i {\sigma}_i h_i(f(x_i))\leq \sqrt{2}L\mathbb{E}\mathop{sup}\limits_{f\in \mathcal{F}}\sum \limits_{i,k} {\sigma}_{ik} f_k(x_i)).
    \end{equation*}
\end{lemma}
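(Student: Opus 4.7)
The plan is to prove this inequality (Maurer's vector contraction for Rademacher complexities) via a three-stage argument: a pairwise symmetrization, a pointwise Lipschitz contraction, and a Khintchine-type inequality to handle the vector-valued outputs. The $\sqrt{2}$ factor will emerge in the last stage as the cost of replacing a Euclidean norm by a Rademacher sum over coordinates. First, I would apply the standard pairwise symmetrization to pass from $\mathbb{E}_\sigma \sup_{f\in\mathcal{F}} \sum_i \sigma_i h_i(f(x_i))$ to (up to a factor of $\frac{1}{2}$) $\mathbb{E}_\sigma \sup_{f,g\in\mathcal{F}} \sum_i \sigma_i [h_i(f(x_i)) - h_i(g(x_i))]$; this step is purely formal and uses only the symmetry of the $\sigma_i$'s.

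Next, I would apply the $L$-Lipschitz property of each $h_i$ pointwise to bound $|h_i(f(x_i)) - h_i(g(x_i))| \leq L \|f(x_i) - g(x_i)\|_2$, reducing the problem to controlling an expected supremum involving only Euclidean norms of the vector differences $f(x_i) - g(x_i)$. To convert each Euclidean norm into a coordinatewise Rademacher average, I would apply a Khintchine-type inequality of the form $\|w\|_2 \leq \sqrt{2}\, \mathbb{E}_{\sigma'}\bigl|\sum_k \sigma'_k w_k\bigr|$, introducing fresh independent Rademacher signs indexed by the output coordinates. Combining this with Fubini (to exchange the coordinatewise expectation with the supremum over $(f,g)$), and using an induction on $n$ to peel off one sample at a time so that the contraction can be applied to the marginal term while the inductive hypothesis handles the remaining $n-1$ terms, one recovers the right-hand side of the claim with the constant $\sqrt{2}\,L$.

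The main obstacle is making the third step quantitatively tight: the constant $\sqrt{2}$ requires invoking the sharp (Szarek) form of Khintchine's inequality, and care is needed to interleave the symmetrized supremum over pairs $(f,g)$ with the newly introduced per-coordinate Rademacher averages without incurring additional losses. This coupling between the supremum over $\mathcal{F}$ and the per-coordinate Rademacher expectation is the conceptual heart of the argument, since the obvious scalar Ledoux--Talagrand contraction does not directly apply when the inner function takes vector values. Once this pointwise-to-Rademacher bound is established cleanly, the remaining induction is a bookkeeping exercise, and the constants match the stated $\sqrt{2}\,L$ exactly.
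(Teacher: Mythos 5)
The paper does not actually prove this lemma: it is stated as Lemma 5.2 and then invoked in the proof of Theorem 5.1, but the proof environment is absent because the result is imported from the literature (it is Maurer's vector-contraction inequality, as used in Lei et al., reference [3]). So there is no in-paper argument to compare against, and your proposal has to be judged on its own terms.

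On its own terms, your outline matches the standard proof of Maurer's inequality: the three ingredients you name --- symmetrization into a supremum over pairs $(f,g)$, the pointwise Lipschitz bound $h_i(f(x_i))-h_i(g(x_i))\le L\norm{f(x_i)-g(x_i)}_2$, and the sharp $L_1$ Khintchine (Szarek) bound $\norm{w}_2\le \sqrt{2}\,\mathbb{E}_{\sigma'}\abs{\sum_k \sigma'_k w_k}$ --- are exactly where the constant $\sqrt{2}L$ comes from, and the induction that peels off one sample at a time is the correct skeleton. One caution about the way you ordered the steps: if you symmetrize \emph{all} coordinates at once and then apply the Lipschitz bound termwise, the Rademacher signs are destroyed and you are left with $\sup_{f,g}\sum_i L\norm{f(x_i)-g(x_i)}_2$, which has no cancellation and is far too lossy to recover a Rademacher complexity of $\mathcal{F}$. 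The symmetrization must be performed for a single index $i$ at a time, conditionally on the remaining $\sigma_j$, so that the exact identity $\mathbb{E}_{\sigma_i}\sup_f(A(f)+\sigma_i h_i(f(x_i)))=\tfrac12\sup_{f,g}(A(f)+A(g)+h_i(f(x_i))-h_i(g(x_i)))$ is available, after which Lipschitz, Khintchine, the removal of the absolute value by the $f\leftrightarrow g$ symmetry, and the splitting of the supremum all go through and the induction absorbs the new per-coordinate signs into the growing Rademacher sum. You gesture at exactly this with the "peel off one sample at a time" remark, so I read the proposal as correct in substance; just make sure the written version performs symmetrization inside the induction step rather than as a global preprocessing pass.
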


\begin{lemma}[Generalization error bound]
    For a real function class $G$ whose functions map from a set $Z$ to $[0,1]$ and for any $\delta >0$, if $S$ is a training set composed by $n$ iid samples ${z_j}_{j=1}^n$, then with probability at least $1-\frac{\delta}{2}$, for all $g\in G$
    \begin{equation*}
        \mathbb{E}[g(z)]\leq \frac{1}{n}\sum \limits_{j=1}^n g(z_i)+\frac{2\mathcal{R}_S(G)}{n}+3\sqrt{\frac{\text{log}\frac{4}{\delta}}{2n}}.
    \end{equation*}
\end{lemma}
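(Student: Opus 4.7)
The plan is to prove the bound by the standard Rademacher-complexity template: symmetrization via a ghost sample to relate the uniform deviation to $\mathcal{R}_S(G)$, combined with two applications of McDiarmid's bounded-differences inequality.

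First I would introduce the uniform-deviation quantity $\Phi(S) := \sup_{g \in G}\bigl(\mathbb{E}[g(z)] - \tfrac{1}{n}\sum_{j=1}^n g(z_j)\bigr)$. Because every $g$ is $[0,1]$-valued, replacing any single coordinate of $S$ by an independent copy changes $\Phi(S)$ by at most $1/n$. McDiarmid's inequality then yields, with probability at least $1 - \delta/4$,
\[
\Phi(S) \leq \mathbb{E}[\Phi(S)] + \sqrt{\tfrac{\log(4/\delta)}{2n}}.
\]

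Next I would bound $\mathbb{E}[\Phi(S)]$ by symmetrization. Drawing a ghost sample $S' = \{z_j'\}$ independently of $S$ and introducing Rademacher signs $\sigma_j \in \{\pm 1\}$, Jensen's inequality followed by the usual sign-swap argument gives
\[
\mathbb{E}[\Phi(S)] \leq \mathbb{E}_{S,S',\sigma}\Bigl[\sup_{g\in G}\tfrac{1}{n}\sum_{j=1}^n \sigma_j\bigl(g(z_j') - g(z_j)\bigr)\Bigr] \leq \tfrac{2}{n}\,\mathbb{E}_S[\mathcal{R}_S(G)],
\]
matching the definition of $\mathcal{R}_S(G)$ used in the lemma (with the $1/n$ factored out). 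A second invocation of McDiarmid, now applied to $S \mapsto \mathcal{R}_S(G)/n$, which again has bounded differences of order $1/n$, yields with probability at least $1 - \delta/4$ that $\mathbb{E}_S[\mathcal{R}_S(G)]/n \leq \mathcal{R}_S(G)/n + \sqrt{\log(4/\delta)/(2n)}$.

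Finally I would union-bound the two high-probability events to conclude that, with probability at least $1 - \delta/2$,
\[
\mathbb{E}[g(z)] - \tfrac{1}{n}\sum_{j=1}^n g(z_j) \leq \Phi(S) \leq \tfrac{2\mathcal{R}_S(G)}{n} + 3\sqrt{\tfrac{\log(4/\delta)}{2n}},
\]
which is exactly the stated bound. The whole argument is routine; the only real care needed is the bookkeeping that makes the two McDiarmid deviations together with the factor-$2$ from symmetrization combine into precisely the constant $3$ in front of the square-root term, and that the log argument lands at $4/\delta$ after splitting the failure probability in half across the two concentration events.
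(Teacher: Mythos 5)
Your proof is correct: it is the standard two-step McDiarmid-plus-symmetrization derivation of the Rademacher generalization bound, and your accounting of the constants is right (the factor of $2$ from symmetrization multiplies the second McDiarmid deviation, so $2\cdot 1 + 1 = 3$, and splitting the failure probability $\delta/2$ into two events of mass $\delta/4$ produces the $\log\frac{4}{\delta}$ in both square roots). The paper itself gives no proof of this lemma --- it is stated as an imported, known result --- so there is nothing to compare against; your argument is exactly the standard one the cited sources rely on.
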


\begin{proof}[Proof of Theorem 5.1]
First, consider the Lipschitz continuity of distance function.
Let $f':V^{2}\rightarrow \mathbb{R}^{2d}$ be defined as
\begin{equation*}
    f'(x,x_i)=(f(x),f(x_i))\in \mathbb{R}^{2d}
\end{equation*}
and $\rho:\mathbb{R}^{2d}\rightarrow \mathbb{R}$ be defined as
\begin{equation*}
    \rho(u,v)=\norm{u-v}_2
\end{equation*}
Then 
\begin{equation*}
    \norm{f(x)-f(x_i)}_2=\rho \circ f'(x,x_i)
\end{equation*}
Clearly, the $\ell_2$-norm distance function is 1-Lipschitz w.r.t. $\ell_2$-norm. Thus, by lemma 5.2, we have
\begin{equation*}
    \mathop{\mathbb{E}}\limits_{{\sigma \sim \{\pm 1\}}^{n}}\left [\mathop{sup}\limits_{f'\in \mathcal{F}'}\sum\limits_{j\in [n]} \sigma_j(\rho \circ f')(x,x_i)\right]
    \leq \sqrt{2}\mathop{\mathbb{E}}\limits_{{\sigma \sim \{\pm 1\}}^{2dn}}\left [\mathop{sup}\limits_{f'\in \mathcal{F}'}\sum\limits_{j\in [n]}\sum\limits_{t\in [d]} \left (\sigma_{jt1}f(x)+\sigma_{jt2}f(x_i)\right )\right]
\end{equation*}

Next, we proof theorem 5.1. According to $L$-Lipschitz of loss function $\ell$ w.r.t. $\ell_2$-norm, applying lemma 5.2, we have
\begin{align*}
    &\mathop{\mathbb{E}}\limits_{{\sigma \sim \{\pm 1\}}^{n}}\left [\mathop{sup}\limits_{f'\in \mathcal{F}'}\sum\limits_{j\in [n]} \sigma_j\ell(\{\rho(f(x_j),f(x_{ji})\}_{i=1}^{k+1})\right]\\
    &\leq \sqrt{2}L\mathop{\mathbb{E}}\limits_{{\sigma \sim \{\pm 1\}}^{n(k+1)}}\left [\mathop{sup}\limits_{f\in \mathcal{F}}\sum\limits_{j\in [n]}\sum\limits_{i\in [k+1]} \sigma_{ji}\norm{f(x_j)-f(x_{ji})}_2\right]\\
    &\leq 2L\mathop{\mathbb{E}}\limits_{{\sigma \sim \{\pm 1\}}^{2n(k+1)d}}\left [\mathop{sup}\limits_{f\in \mathcal{F}}\sum\limits_{j\in [n]}\sum\limits_{i\in [k+1]} \sum\limits_{t\in [d]} \left (\sigma_{jit1}f(x_j)+\sigma_{jit2}f(x_{ji})\right )\right]\\
    &\Rightarrow \mathcal{R}_S(\mathcal{G})\leq 2L\mathcal{R}_S(\mathcal{F'})
\end{align*}
Then by lemma 5.3, 
\begin{align*}
    &L_{con}(f^*)\leq \hat{L}_{con}(f)+\frac{4L\mathcal{R}_S(\mathcal{F'})}{n}+3\sqrt{\frac{\text{log}\frac{4}{\delta}}{2n}}=\hat{L}_{con}(f)+ O\left (\frac{L\mathcal{R}_S(\mathcal{F'})}{n}+\sqrt{\frac{\text{log}\frac{1}{\delta}}{n}}\right )\\
    &\text{where } \mathcal{R}_S(\mathcal{F'})=\mathop{\mathbb{E}}\limits_{{\sigma \sim \{\pm 1\}}^{2n(k+1)d}}\left [\mathop{sup}\limits_{f\in \mathcal{F}}\sum\limits_{j\in [n]}\sum\limits_{i\in [k+1]} \sum\limits_{t\in [d]} \left (\sigma_{jit1}f(x_j)+\sigma_{jit2}f(x_{ji})\right )\right]
\end{align*}
\end{proof}

\begin{corollary}
    If the generalization error $\leq \epsilon$, for $0<\epsilon <1$ and $0<\delta <1$, the data-dependent sample complexity is $n(\epsilon,\delta)=O(kd)$
\end{corollary}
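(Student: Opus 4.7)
The plan is to start from the generalization bound proved in Theorem 5.1 and translate the requirement ``generalization error $\le \epsilon$'' into a sample-size condition on $n$. Theorem 5.1 decomposes the excess risk $L_{con}(f^*) - L_{con}(f)$ into a Rademacher term $L\,\mathcal{R}_S(\mathcal{F}')/n$ and a confidence term of order $\sqrt{\text{log}(1/\delta)/n}$, so the strategy is to upper bound each separately by $\epsilon$ and invert for $n$.

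First I would bound the empirical Rademacher complexity using the assumption $\norm{f(\cdot)}_2\le R$. For each fixed $f$ and each $(j,i)$, Khintchine's inequality gives
\begin{equation*}
    \mathop{\mathbb{E}}_{\sigma}\left|\sum_{t\in[d]}\sigma_{jit1}f_t(x_j)\right|\le \norm{f(x_j)}_2\le R,
\end{equation*}
and the analogous estimate applies to the $\sigma_{jit2}f_t(x_{ji})$ terms. Aggregating over the $2n(k+1)$ blocks of $d$ signs by the usual sub-Gaussian concentration (treating the sum as a random walk in $\mathbb{R}$ of variance at most $2n(k+1)dR^2$) produces $\mathcal{R}_S(\mathcal{F}')=O\bigl(R\sqrt{n(k+1)d}\bigr)$. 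The step I expect to be the main obstacle is the uniformization over $f\in\mathcal{F}$: the above computation handles one fixed $f$, and to pass to the supremum one needs either a standard complexity hypothesis on $\mathcal{F}$ (for example, $\mathcal{F}$ being linear in a fixed feature representation, or admitting a bounded covering number) so that Khintchine/Massart-style aggregation yields the same $\sqrt{n(k+1)d}$ scaling up to constants. Once that uniform bound is secured, dividing by $n$ gives $\mathcal{R}_S(\mathcal{F}')/n=O(R\sqrt{(k+1)d/n})$.

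Plugging into Theorem 5.1 yields
\begin{equation*}
    L_{con}(f^*)-L_{con}(f)=O\left(LR\sqrt{\frac{kd}{n}}+\sqrt{\frac{\text{log}(1/\delta)}{n}}\right),
\end{equation*}
and requiring the right-hand side to be at most $\epsilon$ forces
\begin{equation*}
    n=O\left(\frac{L^2R^2\,kd+\text{log}(1/\delta)}{\epsilon^2}\right).
\end{equation*}
Treating $L$, $R$, $\epsilon$ and $\delta$ as fixed constants, this collapses to $n(\epsilon,\delta)=O(kd)$, matching the corollary. Apart from the Rademacher-complexity estimate, every remaining step is purely algebraic.
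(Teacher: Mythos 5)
The paper states this corollary without any proof, so there is no in-paper argument to compare against; your reconstruction --- invert the bound of Theorem 5.1 by estimating $\mathcal{R}_S(\mathcal{F}')$ and solving for $n$ --- is the only plausible intended route, and your final arithmetic ($n = O((L^2R^2\,kd+\log(1/\delta))/\epsilon^2)$, collapsing to $O(kd)$ once $\epsilon$, $\delta$, $L$, $R$ are treated as constants) is evidently what the statement means, since the corollary suppresses the $\epsilon$ and $\delta$ dependence entirely.

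The obstacle you flag is a genuine gap, and it is worth being precise about where it bites. Under the sole hypothesis $\|f(\cdot)\|_2\le R$, the supremum over $f$ can be taken blockwise: by Cauchy--Schwarz, $\sup_{\|u\|_2\le R}\sum_{t\in[d]}\sigma_t u_t = R\|\sigma\|_2 = R\sqrt{d}$ for each of the $2n(k+1)$ blocks, so if $\mathcal{F}$ is rich enough to realize these choices independently across data points then $\mathcal{R}_S(\mathcal{F}')$ is of order $n(k+1)R\sqrt{d}$ --- linear in $n$ --- and $\mathcal{R}_S(\mathcal{F}')/n$ never decays, so no finite $n$ achieves error $\epsilon$. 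Note also that the $\sqrt{n(k+1)d}$ scaling you want is not a concentration statement about a fixed $f$: for fixed $f$ the sum has mean zero and variance $\sum_{j,i}\left(\|f(x_j)\|_2^2+\|f(x_{ji})\|_2^2\right)\le 2n(k+1)R^2$, with no extra factor of $d$, so your claimed variance $2n(k+1)dR^2$ overstates by a factor of $d$; the $\sqrt{d}$ in the target bound can only come from the capacity of $\mathcal{F}$ under the supremum. The conclusion is that the corollary does not follow from Theorem 5.1 together with $\|f\|_2\le R$ alone; your skeleton is correct, but you (and the paper) must add a structural hypothesis on $\mathcal{F}$ --- for instance a norm-bounded linear or kernel class, or a covering-number bound --- under which $\mathcal{R}_S(\mathcal{F}')=O(R\sqrt{n(k+1)d})$ actually holds.
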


\paragraph{binary case}
\begin{enumerate}
    \item $L_{con}(f)=\mathbb{E}[\ell(h_{\rho}(f(x),f(x_1),f(x_2))]$
    \item $\hat{L}_{con}(f)=\frac{1}{n}\sum \limits_{j=1}^n\ell(\{h_{\rho}(f(x_j),f(x_{j1}),f(x_{j2}))\}_{j=1}^n)$
\end{enumerate}

\begin{proof}[Proof of binary case]
Since sign function is not continuously differentiable, so we assume the hypothesis $h_\rho$ is calculation of difference without sign function. Then we consider the Lipschitz continuity of 
\begin{equation*}
    h_\rho(u,v,w) = \norm{u-v}_2-\norm{u-w}_2
\end{equation*}
By triangle inequality we have
\begin{align*}
    |h_\rho(u_1,v_1,w_1)-h_\rho(u_2,v_2,w_2)| &= |\norm{u_1-v_1}_2-\norm{u_1-w_1}_2-\norm{u_2-v_2}_2+\norm{u_2-w_2}_2|\\
    &\leq |\norm{u_1-v_1}_2-\norm{u_2-v_2}_2| + |\norm{u_1-w_1}_2-\norm{u_2-w_2}_2|\\
    &\leq \norm{u_1-v_1-u_2+v_2}_2 + \norm{u_1-w_1-u_2+w_2}_2\\
    &=\norm{(u_1-u_2)-(v_1-v_2)}_2 + \norm{(u_1-u_2)-(w_1-w_2)}_2\\
    &\leq \norm{u_1-u_2}_2+\norm{v_1-v_2}_2+\norm{u_1-u_2}_2+\norm{w_1-w_2}_2\\
    &=2\norm{u_1-u_2}_2+\norm{v_1-v_2}_2+\norm{w_1-w_2}_2\\
    &\leq 2\norm{(u_1,v_1,w_1)-(u_2,v_2,w_2)}_2
\end{align*}
Therefore, $h_\rho$ is 2-Lipschitz.

According to $L$-Lipschitz of loss function $\ell$ w.r.t. $\ell_2$-norm, applying lemma 5.2, we have
\begin{align*}
    &\mathop{\mathbb{E}}\limits_{{\sigma \sim \{\pm 1\}}^{n}}\left [\mathop{sup}\limits_{f'\in \mathcal{F}'}\sum\limits_{j\in [n]} \sigma_j\ell(h_{\rho}(f(x_j),f(x_{j1}),f(x_{j2})))\right]\\
    &\leq \sqrt{2}L\mathop{\mathbb{E}}\limits_{{\sigma \sim \{\pm 1\}}^{n}}\left [\mathop{sup}\limits_{f'\in \mathcal{F}'}\sum\limits_{j\in [n]} \sigma_j h_{\rho}(f(x_j),f(x_{j1}),f(x_{j2}))\right]\\
    &\leq 4L\mathop{\mathbb{E}}\limits_{{\sigma \sim \{\pm 1\}}^{3nd}}\left [\mathop{sup}\limits_{f'\in \mathcal{F}'}\sum\limits_{j\in [n]} \sum\limits_{t\in [d]}\sigma_{jt0}f(x_j)+\sigma_{jt1}f(x_{j1})+\sigma_{jt2}f(x_{j2})\right]
\end{align*}
\end{proof}

\begin{corollary}
    If the generalization error $\leq \epsilon$, for $0<\epsilon <1$ and $0<\delta <1$, the data-dependent sample complexity is $n(\epsilon,\delta)=O(d)$
\end{corollary}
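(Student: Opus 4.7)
The plan is to apply the Rademacher-complexity generalization bound developed in the proof of the binary case above and then control the resulting $\mathcal{R}_S(\mathcal{F}')$ using the implicit norm constraint $\norm{f(\cdot)}_2 \leq R$ carried over from Theorem 5.1. Solving the resulting inequality for $n$ so that the generalization error is at most $\epsilon$ will then yield $n = O(d)$ once $\epsilon$, $\delta$, $L$, and $R$ are absorbed into the constant, exactly mirroring how Corollary 5.4 is extracted from Theorem 5.1 in the $k$-negative case.

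Concretely, I would first reuse the chain already produced in the binary-case proof. Combining the $2$-Lipschitzness of $h_\rho$, the vector contraction lemma (Lemma 5.2), and the generalization error bound (Lemma 5.3), one obtains, with probability at least $1-\delta$,
\[
L_{con}(f^*) \leq \hat{L}_{con}(f) + O\!\left(\frac{L\,\mathcal{R}_S(\mathcal{F}')}{n} + \sqrt{\frac{\log(1/\delta)}{n}}\right),
\]
where $\mathcal{R}_S(\mathcal{F}')$ is the $3nd$-term Rademacher average displayed at the end of that proof. I would then bound $\mathcal{R}_S(\mathcal{F}')$ directly: since every coordinate $f_t(\cdot)$ is bounded by $R$, a Cauchy--Schwarz/Khintchine argument on $3nd$ independent sign-weighted bounded contributions gives $\mathcal{R}_S(\mathcal{F}') \leq c\, R\sqrt{nd}$ for an absolute constant $c$.

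Plugging this bound in, the dominant generalization term becomes $O(LR\sqrt{d/n})$ and the confidence term $O(\sqrt{\log(1/\delta)/n})$. Requiring each to be at most $\epsilon$ yields
\[
n = \Omega\!\left(\frac{L^2 R^2 d}{\epsilon^2} + \frac{\log(1/\delta)}{\epsilon^2}\right),
\]
which collapses to $n = O(d)$ under the standing convention in the stated corollary that $\epsilon$, $\delta$, $L$, and $R$ are treated as constants. This precisely parallels Corollary 5.4, where $3nd$ is replaced by $2n(k+1)d$ and the same reduction gives $O(kd)$.

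The main obstacle I anticipate is justifying the $O(R\sqrt{nd})$ bound on $\mathcal{R}_S(\mathcal{F}')$ without imposing additional structural assumptions on $\mathcal{F}$: the bound is immediate if $\mathcal{F}$ is taken to be a norm-bounded linear class, because each $\sum_{j,t}\sigma_{jt}f_t(x_j)$ then reduces by Cauchy--Schwarz to $R$ times the $\ell_2$-norm of the sign vector, whose expectation is $\sqrt{nd}$. For a more general class one would need a covering-number or explicit-parameterization argument; I would therefore state the linear-representation convention (already implicit in the $d$-dimensional representation setting used throughout the paper) at the outset to keep the proof self-contained.
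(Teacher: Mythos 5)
The paper states this corollary with no proof at all --- the preceding ``Proof of binary case'' stops at the Rademacher contraction chain and never extracts a sample complexity --- so there is no argument of record to compare yours against. Your route (the $2$-Lipschitz bound on $h_\rho$, Lemma 5.2, Lemma 5.3, a bound on $\mathcal{R}_S(\mathcal{F}')$, then solving for $n$) is the natural one and is consistent with how Corollary 5.4 must be intended in the $k$-negative case.

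However, the step you flag yourself is a genuine gap, not a presentational nicety. The only hypothesis the paper carries into this section is $\|f(\cdot)\|_2\le R$, and under that hypothesis alone the bound $\mathcal{R}_S(\mathcal{F}')\le c\,R\sqrt{nd}$ is false: if $\mathcal{F}$ is the full class of functions with $\|f(x)\|_2\le R$, then for each realization of the signs one may choose $f_t(x_j)=R\,\sigma_{jt}/\sqrt{d}$, which satisfies the norm constraint and gives $\sup_{f}\sum_{j,t}\sigma_{jt}f_t(x_j)=nR\sqrt{d}$; the Rademacher complexity then grows linearly in $n$ and the generalization term $\mathcal{R}_S(\mathcal{F}')/n$ never decays, so no finite $n$ suffices. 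Hence the linear (or otherwise finitely parameterized) structure of $\mathcal{F}$ is a necessary additional assumption and must be added to the statement of the corollary, not invoked as an implicit convention. Separately, even granting $\mathcal{R}_S(\mathcal{F}')=O(R\sqrt{nd})$, what your calculation actually yields is $n=O\bigl(L^2R^2d/\epsilon^2+\log(1/\delta)/\epsilon^2\bigr)$; writing this as $O(d)$ silently absorbs the $\epsilon^{-2}$ and $\log(1/\delta)$ factors, which is defensible only because the corollary (like Corollary 5.4) is already phrased that way --- the dependence should be stated explicitly.
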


\end{document}